\documentclass[10pt,journal,compsoc]{IEEEtran}

\usepackage{tikz}
\usepackage{pgfplots}
\usetikzlibrary{calc}
\usepackage{fancyhdr}
\usepackage{times}
\usepackage{epsfig}
\usepackage{graphicx}
\usepackage{amsmath}
\usepackage{amssymb}
\usepackage{multirow}
\usepackage{verbatim}
\usepackage{color}
\usepackage{aliascnt}
\usepackage{amsthm}
\usepackage{caption}
\usepackage{subcaption}
\usepackage{multirow}
\usepackage{wrapfig}
\usepackage{footnote}
\usepackage{cancel}
\usepackage{array}
\usepackage[normalem]{ulem}

\usepackage[ruled,titlenotnumbered]{algorithm2e}

\hyphenation{lists} \makeatletter
\def\ifundefined{\@ifundefined}
\makeatother

\newtheorem{example}{\sc Example}
\newtheorem{lem}{Lemma}

\newtheorem{theor}{Theorem}

\def\utc{\mbox{$\;\overset{c}{=}\;$}}
\def\approxutc{\mbox{$\;\overset{c}{\approx}\;$}}
\def\equivutc{\mbox{$\;\overset{c}{\equiv}\;$}}

\def\P{\mbox{${\cal P}$}}

\def\Pk{\mbox{${\cal P}_{\Sigma}$}}

\def\eg{{\em e.g.~}}
\def\ie{{\em i.e.~}}

\def\Real{{\mathbb R}}

\def\x{{\mathbf x}}

\def\Real{\mathcal{R}}

\def\df{\,\mathrm{d}}

\def\af{s}

\def\dS{\mbox{$\rho^{\af}_1$}}
\def\dSbar{\mbox{$\rho^{\af}_2$}}
\def\dSk{\mbox{$\rho^{\af}_k$}}

\def\rbf{\mbox{$\psi$}}
\def\rsmall/{\mbox{$r${\it{-small}}}}

\def\thebias/{Breiman's bias}

\usepackage{MnSymbol}
\def\const{\mbox{$\mathit{const}$}}            

\newcommand{\E}{{\mathbf E}}

\def\KNN/{\mbox{$K\!N\!N$}}
\def\KM/{{\sf KM}}
\def\pKM/{{\color{blue}{\sf pKM}}}
\def\kKM/{{{\color{blue}$k${\sf KM}}}}
\def\wkKM/{{\color{blue}$wk${\sf KM}}}
\def\AA/{{\sf AA}}
\def\AD/{{\sf AD}}
\def\NC/{{\sf NC}}
\def\AC/{{\sf AC}}

\ifx\undefined\NEW
	\def\NEW{ }
\fi


\usepackage[pagebackref=true,breaklinks=true,letterpaper=true,colorlinks,bookmarks=false]{hyperref}

\begin{document}

\title{Kernel clustering: density biases and solutions}

\author{Dmitrii Marin$^*$ \hspace{5ex} Meng Tang$^*$ \hspace{5ex} 
Ismail Ben Ayed$^{\dag}$ \hspace{5ex}  Yuri Boykov$^*$ \\[0.5ex]
{\small $^*$Computer Science, University of Western Ontario, Canada  \hspace{3ex} $^\dag$\'Ecole de Technologie Sup\'erieure, University of Quebec, Canada} \\[-1ex]
{\tt\small dmitrii.a.marin@gmail.com     mtang73@csd.uwo.ca    ismail.benayed@etsmtl.ca   yuri@csd.uwo.ca}
 \vskip -1cm}

\maketitle
 \vskip -1cm
\begin{abstract}
Kernel methods are popular in clustering due to their generality and discriminating power.  
However, we show that many kernel clustering criteria have {\em density biases}
theoretically explaining some practically significant artifacts 
empirically observed in the past. 
For example, we provide conditions and formally prove the {\em density mode isolation} 
bias in kernel K-means for a common class of kernels.
We call it \thebias/ due to its similarity to the {\em histogram mode isolation} 
previously discovered by Breiman in decision tree learning with Gini impurity.
We also extend our analysis to other popular kernel clustering methods, 
\eg average/normalized cut or dominant sets, where density biases 
can take different forms. For example, splitting isolated points by cut-based criteria 
is essentially the sparsest subset bias, which is the opposite of the density mode bias.
Our findings suggest that a principled solution for density biases in kernel clustering
should directly address data inhomogeneity. 
We show that {\em density equalization} can be implicitly achieved using either 
locally adaptive weights or locally adaptive kernels. 
Moreover, density equalization makes many popular kernel clustering objectives equivalent.
Our synthetic and real data experiments illustrate density biases and proposed solutions.
We anticipate that theoretical understanding of kernel clustering limitations and their principled solutions
will be important for a broad spectrum of data analysis applications across the disciplines.

\end{abstract}

\section{Introduction}

\begin{figure}[t]
        \centering
        \begin{tabular}{@{\extracolsep{0.2ex}}l@{\extracolsep{1ex}}cc}
        \multirow{2}{*}[14ex]{   \rotatebox[origin=c]{90}{ \it uniform density data}   }    &
        \includegraphics[width=1.3in]{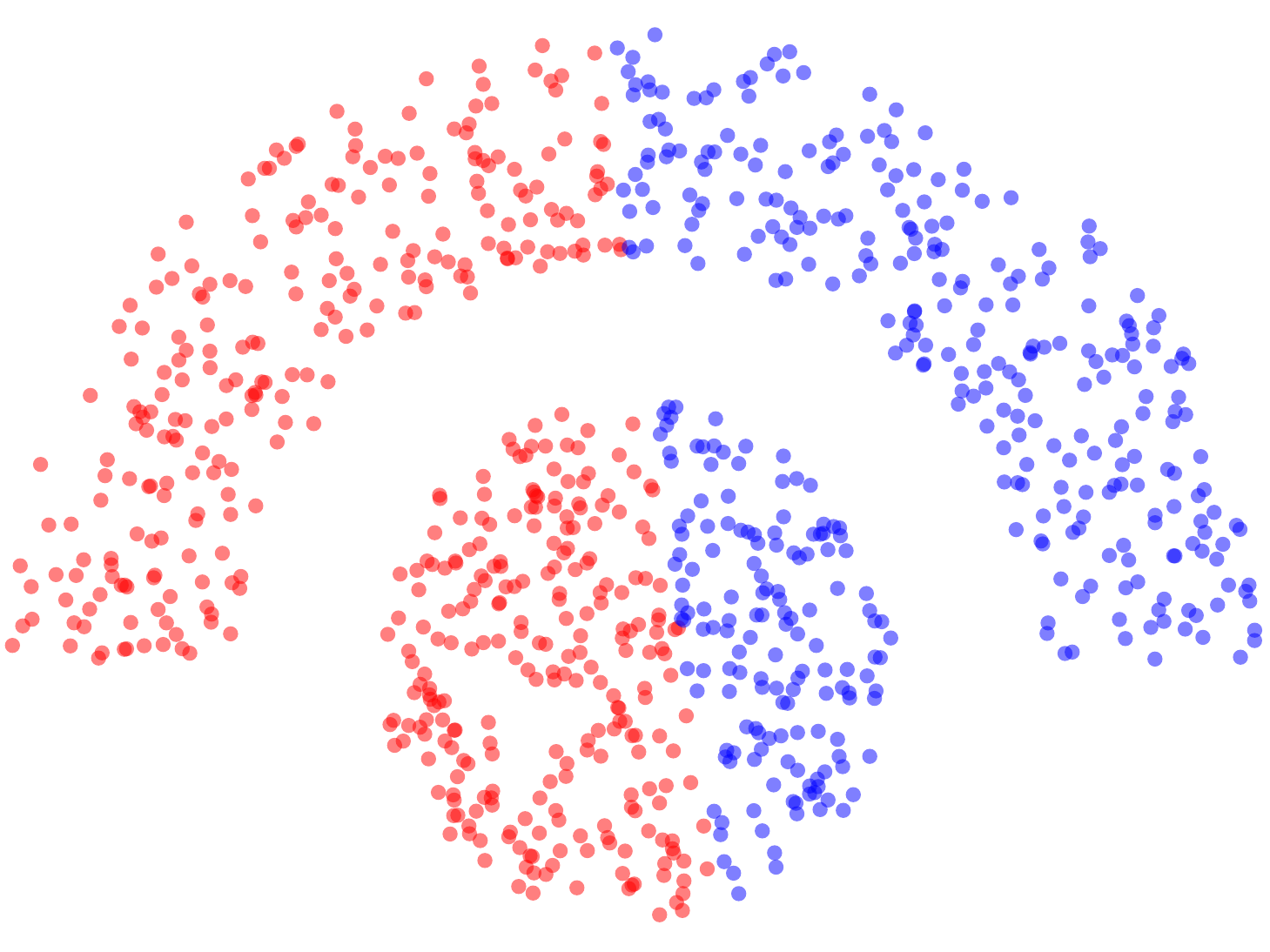} &
        \includegraphics[width=1.3in]{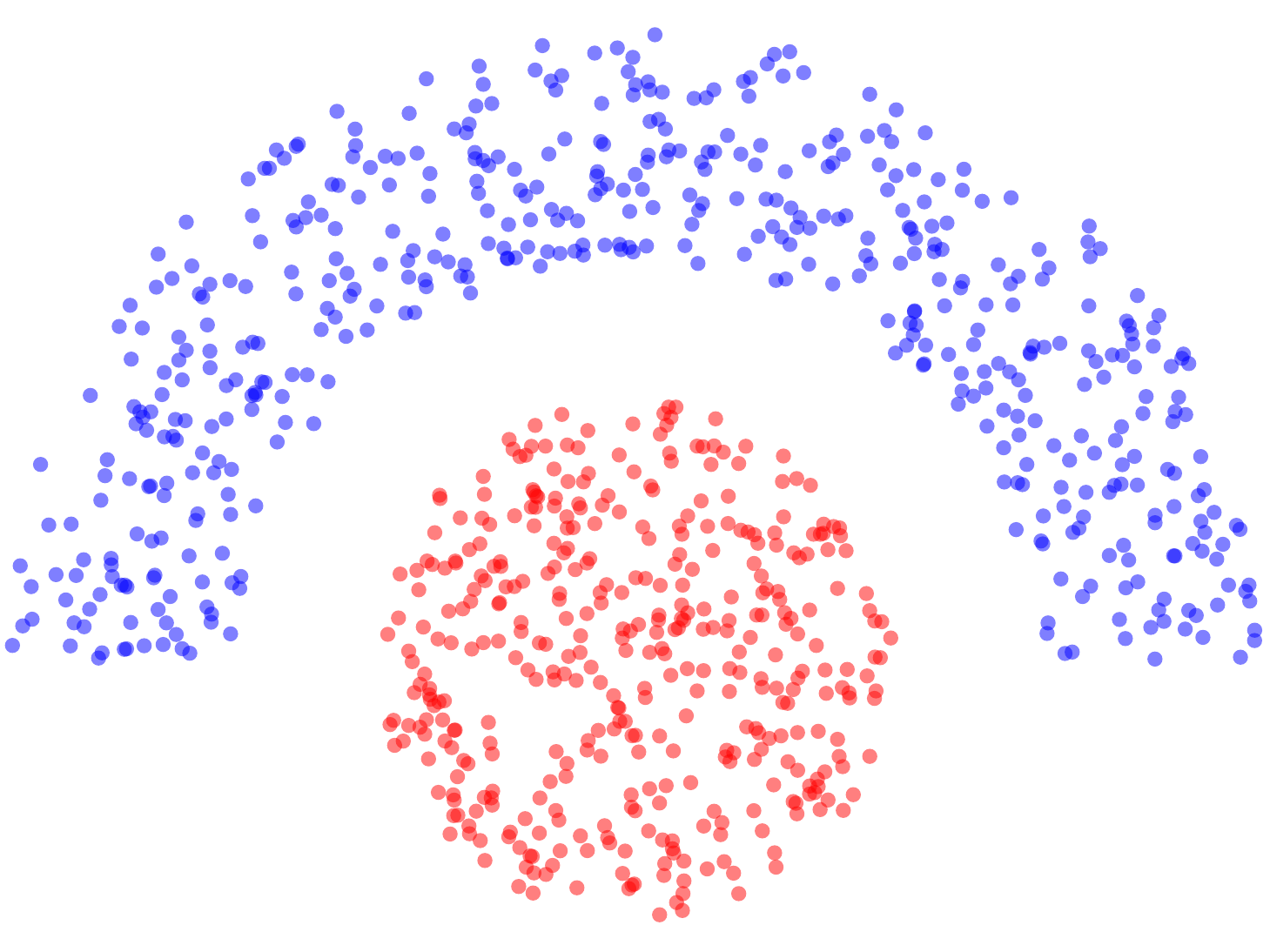} \\[-1ex]
        & (a) K-means & (b) kernel K-means \\[2ex]
       \multirow{1}{*}[13ex]{  \rotatebox[origin=c]{90}{ \it non-uniform data} } 
       &\includegraphics[width=1.5in]{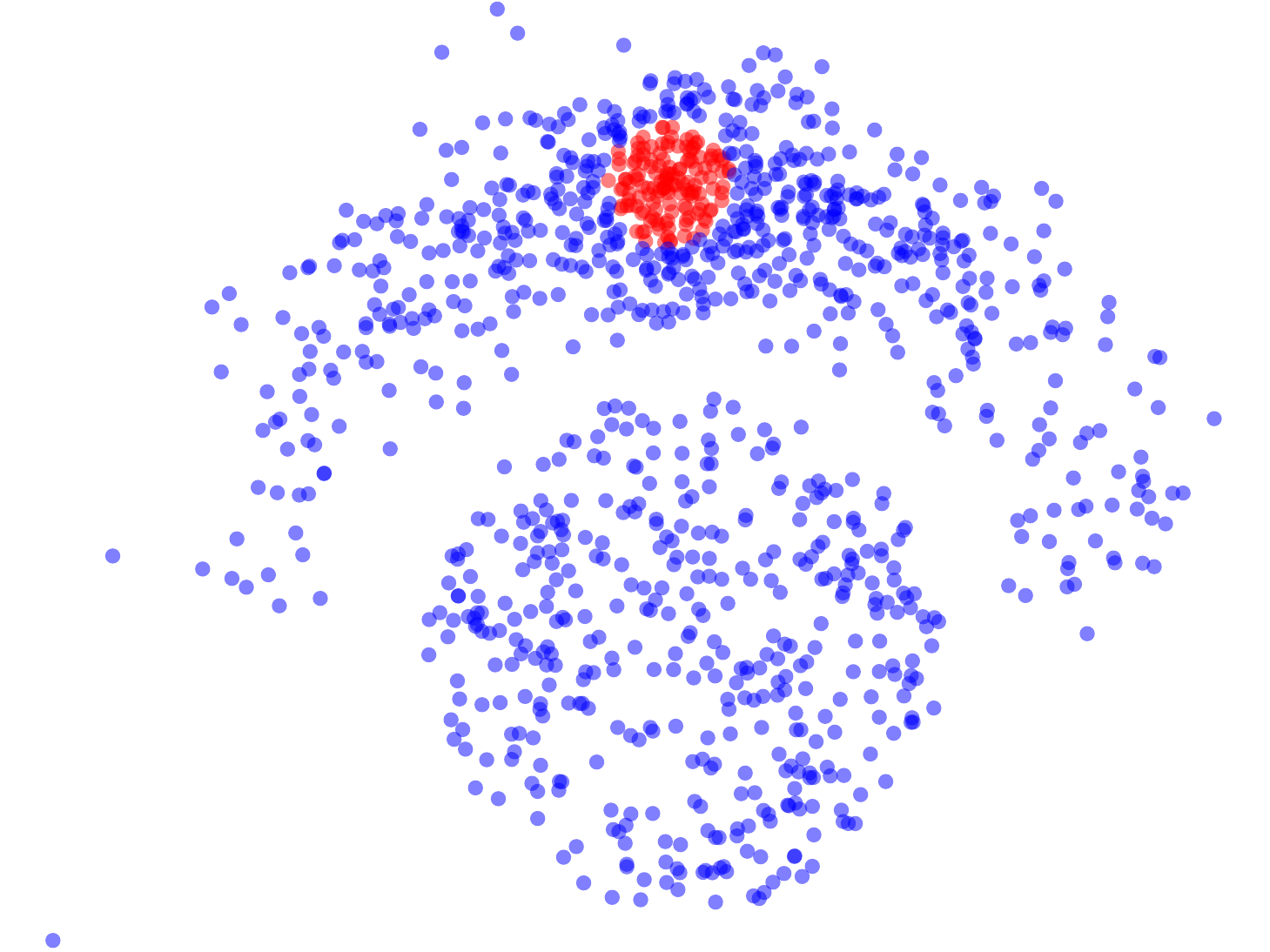}&
        \includegraphics[width=1.5in]{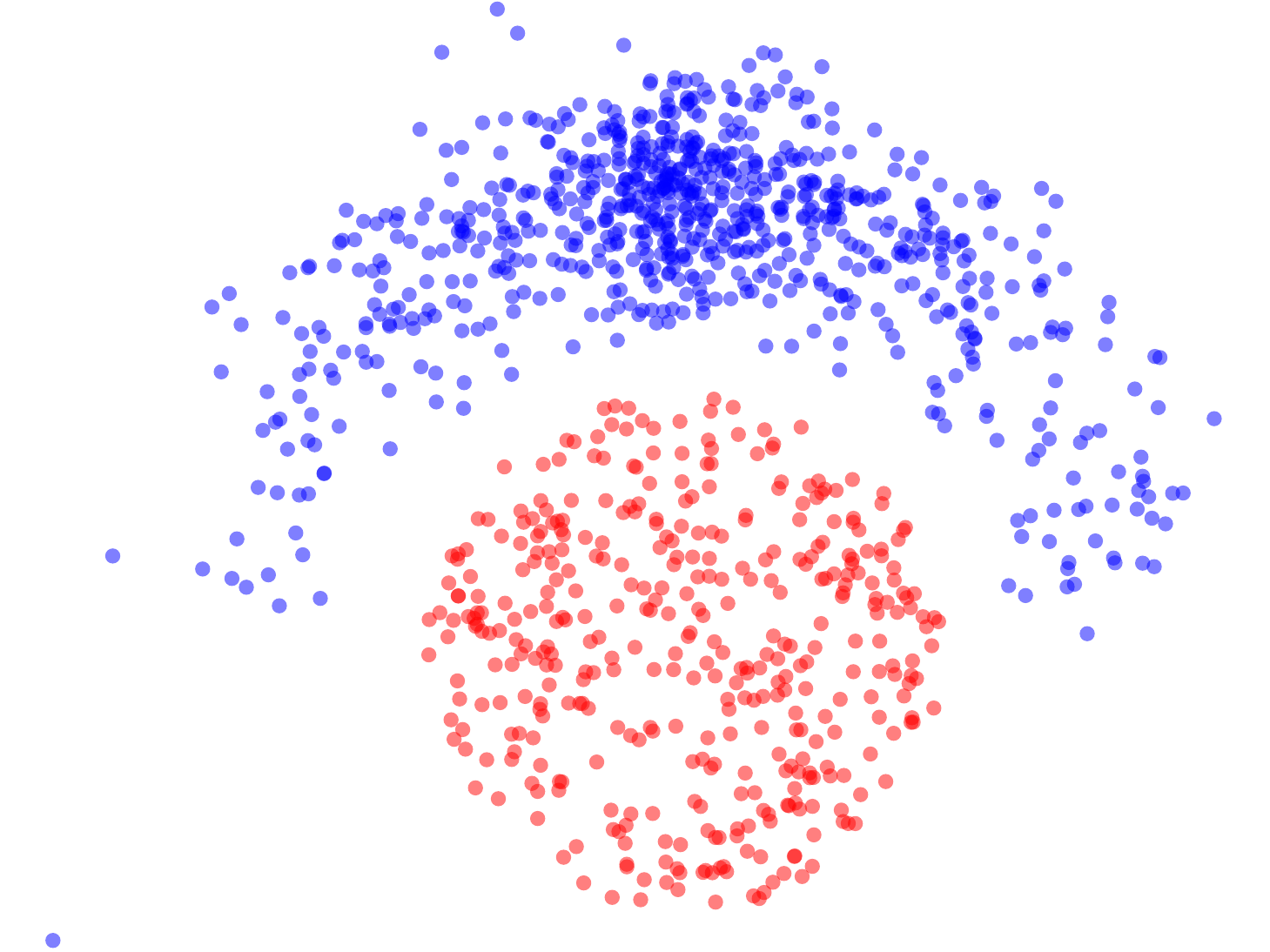} \\[-0.5ex]
        & (c) kernel K-means
        & (d) kernel clustering\\
         & \footnotesize (\thebias/, mode isolation) &  \footnotesize (adaptive weights or kernels) 
         \end{tabular}
        \caption{\label{fig:synexp} Kernel K-means with Gaussian kernel \eqref{eq:gauss_k} gives desirable nonlinear separation for {\em uniform} density clusters (a,b). But, 
        for {\em non-uniform} clusters in (c) it either isolates a small dense ``clump'' for smaller $\sigma$ due to \thebias/ (Section~\ref{sec:analysis}) or gives
        results like (a) for larger $\sigma$. No fixed $\sigma$ yields solution (d) given by locally adaptive kernels or weights eliminating the bias (Sections~\ref{sec:Bandwidth selection} \& \ref{sec:adaptive weighting}).%
        }
\end{figure}

In machine learning, \emph{kernel clustering} is a well established data analysis technique \cite{scholkopf1998kernelpca,vapnik:98,Shi2000,Ksurvey:TNN01,zhang2002large,Girolami2002,dhillon2004kernel,Marcello2007,Chitta2011,Hartley:pami2015kernels}
that can identify non-linearly separable structures, see Figure~\ref{fig:synexp}(a-b). Section~\ref{sec:kkm overview} reviews the kernel K-means 
and related clustering objectives, some of which have theoretically explained biases, see Section~\ref{sec:other criteria}.
In particular, Section~\ref{sec:gini overview} describes the discrete {\em Gini clustering criterion} standard  in decision tree learning
where Breiman \cite{breiman1996} proved a bias to histogram mode isolation. 

{

Empirically, it is well known that kernel K-means or {\em average association} (see Section~\ref{sec:Graph_Clustering})
has a bias to so-called ``tight'' clusters for small bandwidths~\cite{Shi2000}. Figure~\ref{fig:synexp}(c) demonstrates this
bias on a non-uniform modification of a typical toy example for kernel K-means with common Gaussian kernel
\begin{equation} \label{eq:gauss_k} 
k(x,y)\propto \exp \left(  -\frac{\|x-y\|^2}{2\sigma^2} \right).
\end{equation}
This paper shows in Section~\ref{sec:analysis}  that under certain conditions kernel K-means approximates the {\em continuous} 
generalization of the Gini criterion where we formally prove a mode isolation bias  similar to the discrete case analyzed by Breiman. 
Thus, we refer to the ``tight'' clusters in kernel K-means as {\em \thebias/}. 
 
We propose a {\em density equalization} principle directly addressing the cause of \thebias/. 
First, Section \ref{sec:adaptive weighting} discusses modification of the density with adaptive point weights. 
Then, Section~\ref{sec:Bandwidth selection} {\NEW shows that a general class of locally adaptive 
{\em geodesic kernels}~\cite{Hartley:pami2015kernels} implicitly transforms data and modifies its density. }
We derive ``density laws'' relating adaptive weights and kernels to density transformations. 
They allow to implement {\em density equalization} resolving \thebias/, see Figure~\ref{fig:synexp}(d). 
One popular heuristic~\cite{zelnik2004self} approximates a special case of our Riemannian kernels. 

Besides mode isolation, kernel clustering may have the opposite density bias,
\eg {\em sparse subsets} in Normalized Cut \cite{Shi2000}, see Figure~\ref{fig:nc fails}(a).
Section~\ref{sec:NC and BB} presents ``normalization'' as implicit \emph{density inversion} 
establishing a formal relation between sparse subsets and \thebias/. 
Equalization addresses any density biases. Interestingly,
density equalization makes many standard kernel clustering criteria conceptually equivalent, see Section~\ref{sec:discussion}.

}

\begin{figure*}[t]
\begin{tabular}{cc}
\includegraphics[width=0.55\columnwidth]{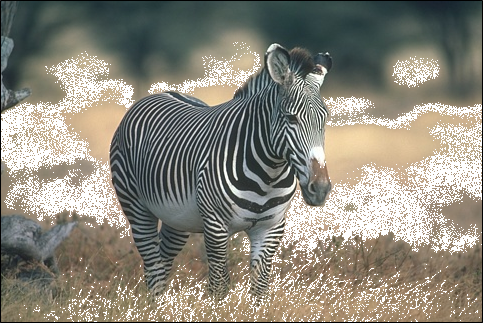}  
\includegraphics[width=0.45\columnwidth]{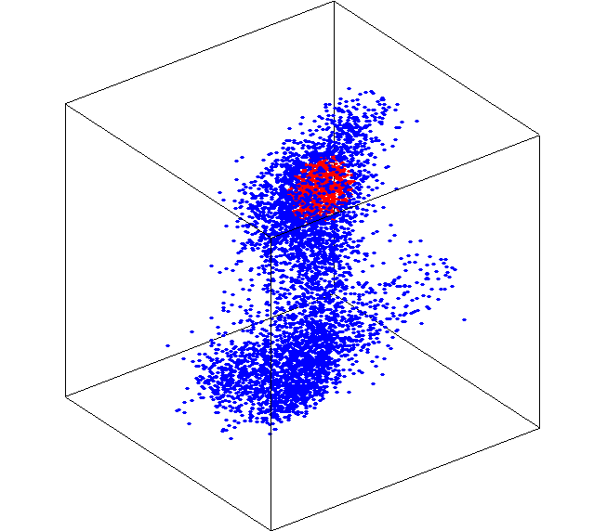}  &
\includegraphics[width=0.55\columnwidth]{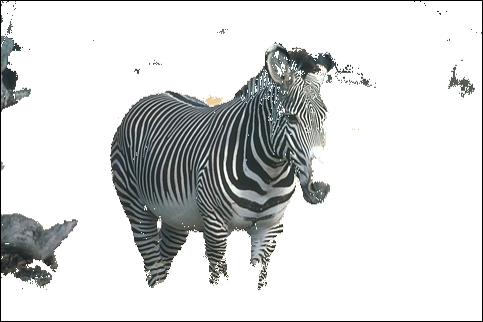}  
\includegraphics[width=0.45\columnwidth]{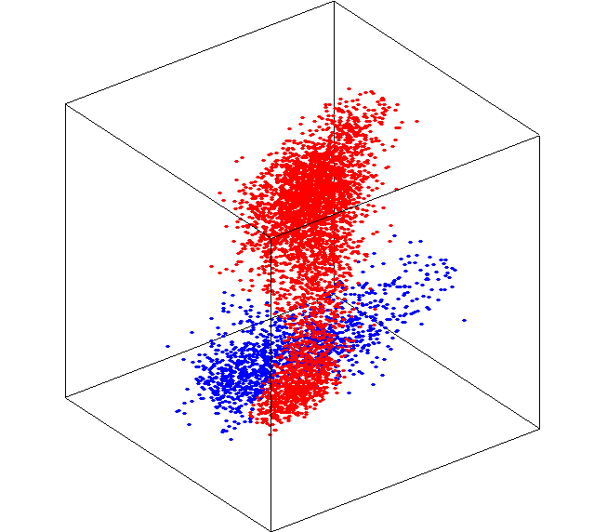} \\
(a) \thebias/ & (b) good clustering 
\end{tabular}
\caption{Example of \thebias/ on real data. Feature vectors are 3-dimensional LAB colours corresponding to image pixels. 
Clustering results are shown in two ways. First, {\em red} and {\em blue} show different clusters inside LAB space. 
Second, pixels with colours in the ``background'' (red) cluster are removed from the original image.
(a) shows the result for kernel K-means with a fixed-width Gaussian kernel isolating a small dense group of pixels from the rest. 
(b) shows the result for an adaptive kernel, see Section \ref{sec:Bandwidth selection}.}
\label{fig:bias example}
\end{figure*}

\subsection{Kernel K-means} \label{sec:kkm overview}

A popular data clustering technique, {\em kernel K-means} \cite{scholkopf1998kernelpca} 
is a generalization of the basic {\em K-means} method. 
{   Assuming $\Omega$ denotes a finite set of points and $f_p \in \Real^N$ is a feature (vector) for point~$p$, 
the basic K-means minimizes the sum of squared errors within clusters, 
that is, distances from points $f_p$ in each cluster $S_k\subset\Omega$ to the cluster means $m_k$
\begin{equation} \label{eq:kmeans}
\left(\parbox{9ex}{\centering\bf\small k-means criterion}\right)\quad\quad\quad\quad\quad\quad
\sum_k \sum_{p \in S^k} \|f_p - m_k \|^2.\quad\quad\quad\quad
\end{equation}} 

Instead of clustering data points $\{f_p \; | \; p\in\Omega\}\subset \mathcal{R}^N$
in their original space, kernel K-means uses mapping 
$\phi \;\; : \;\; \mathcal{R}^N\rightarrow\mathcal{H}$ embedding 
input data $f_p \in \mathcal{R}^N$ as points $\phi_p \equiv \phi(f_p)$ in a higher-dimensional Hilbert space $\mathcal{H}$. 
Kernel K-means minimizes the sum of squared errors in the embedding space corresponding to
the following (mixed) objective function 
\begin{equation} \label{eq:kmixed}
F(S, m) \;\;=\;\; \sum_k \sum_{p \in S^k} \|\phi_p - m_k \|^2
\end{equation}
where $S=(S^1,S^2,\dots,S^K)$ is a partitioning (clustering) of $\Omega$ into $K$ clusters, $m=(m_1, m_2, \dots m_K)$ 
is a set of parameters for the clusters, and $\|.\|$ denotes the {  Hilbertian} norm\footnote{  Our later examples use finite-dimensional embeddings $\phi$ where 
$\mathcal{H}=\mathcal{R}^M$ is an Euclidean space ($M\gg N$) and $\|.\|$ is the Euclidean norm.}. 
Kernel K-means finds clusters separated by hyperplanes in $\mathcal{H}$. 
In general, these hyperplanes correspond to non-linear surfaces in the original input space $\mathcal{R}^N$. 
In contrast to~\eqref{eq:kmixed}, standard K-means objective \eqref{eq:kmeans}
is able to identify only linearly separable clusters in $\mathcal{R}^N$. 

Optimizing $F$ with respect to the parameters yields closed-form solutions corresponding to the cluster means 
in the embedding space:
\begin{equation} \label{eq:mu_phi}
\hat{m}_k = \frac{\sum_{q \in S^k} \phi_q}{|S^k|}
\end{equation}
where $|.|$ denotes the cardinality (number of points) in a cluster.  
Plugging optimal means \eqref{eq:mu_phi} into objective \eqref{eq:kmixed} yields a high-order function, which depends solely 
on the partition variable $S$: 
\begin{equation}
\label{eq:kKmeans1}
 F(S) \;\;=\;\; \sum_k \sum_{p \in S^k} \left \| \phi_p-  \frac{\sum_{q \in S^k} \phi_q}{|S^k|} \right \|^2.
\end{equation} 
Expanding the Euclidean distances in \eqref{eq:kKmeans1}, one can obtain an equivalent 
pairwise clustering criterion expressed solely in terms of inner products $\langle \phi(f_p), \phi(f_q) \rangle$ 
in the embedding space $\mathcal{H}$:
\begin{equation} 
F(S) \;\;\utc\;\;  - \sum_k \frac{\sum_{pq \in S^k}\langle \phi(f_p), \phi(f_q) \rangle}{|S^k|} 
\end{equation} 
where $\utc$ means equality up to an additive constant. The inner product is often replaced with kernel $k$, a symmetric function:
\begin{equation} 
\label{eq:dotproduct} 
k(x,y) := \langle\phi(x),\phi(y)\rangle.
\end{equation}
Then, kernel K-means objective \eqref{eq:kKmeans1} can be presented as
\begin{equation} \label{eq:kKmeans3}
\left(\parbox{9ex}{\centering\bf\small kernel \\ k-means criterion}\right)\quad\quad\quad
F(S) \;\;\utc\;\;  - \sum_k \frac{\sum_{pq \in S^k}k(f_p,f_q)}{|S^k|}. \quad
\end{equation}

Formulation~\eqref{eq:kKmeans3} enables optimization in high-dimensional space $\mathcal H$ that only uses kernel computation and does not require computing the embedding $\phi(x)$. Given a kernel function, one can use the kernel K-means without knowing the corresponding embedding. However, not any symmetric function corresponds to the inner product in some space. Mercer's theorem \cite{vapnik:98} states that any \emph{positive semidefinite} (p.s.d.) kernel function $k(x,y)$ can be expressed as an inner product in a higher-dimensional space.
While p.s.d. is a common assumption for kernels, pairwise clustering objective \eqref{eq:kKmeans3} is often extended beyond
p.s.d. affinities. There are many other extension of kernel K-means criterion \eqref{eq:kKmeans3}.
Despite the connection to density modes made in our paper, kernel clustering has only a weak relation to {\em mean-shift} \cite{meanshift:02}, 
\eg see \cite{KC:arXiv16}.

\subsubsection{Related graph clustering criteria} \label{sec:Graph_Clustering}

Positive semidefinite kernel $k(f_p, f_q)$ in \eqref{eq:kKmeans3}  can be replaced by an arbitrary 
pairwise similarity or affinity matrix $A=[A_{pq}]$. This yields the {\em average association} criterion, which 
is known in the context of graph clustering \cite{Shi2000,buhmann:pami03,dhillon2004kernel}:
\begin{equation} \label{eq:AA}
 -\sum_k \frac{\sum_{pq \in S^k} A_{pq}}{|S^k|}.
\end{equation}

The standard kernel K-means algorithm \cite{dhillon2004kernel,Chitta2011} is not guaranteed to decrease \eqref{eq:AA} 
for improper (non p.s.d.) kernel $k := A$. However,~\cite{buhmann:pami03} showed that dropping  p.s.d. assumption
is not essential: for arbitrary association $A$ {  there is a p.s.d. kernel $k$ 
such that objective \eqref{eq:kKmeans3} is} equivalent  to \eqref{eq:AA} up to a constant.

In~\cite{Shi2000} authors experimentally observed that the average association~\eqref{eq:AA} or 
kernel K-means \eqref{eq:kKmeans3} objectives have a bias to separate small dense group of data points 
from the rest, \eg see Figure~\ref{fig:bias example}. 

Besides average association, there are other pairwise graph clustering criteria related to kernel K-means. 
{\em Normalized cut} is a common objective in the context of spectral clustering \cite{Shi2000,von:tutorial2007}. It optimizes the following objective 
\begin{equation}\label{eq:NC}
 -\sum_k \frac{\sum_{pq \in S^k} A_{pq}}{\sum_{p \in S^k}d_p} .
\end{equation}
where $ d_p = \sum_{q \in \Omega} A_{pq} $. Note that for $d_p=1$ equation~\eqref{eq:NC} reduces to~\eqref{eq:AA}.
It is known that Normalized cut objective is equivalent to a weighted version of kernel K-means criterion
\cite{bach:nips03,dhillon2004kernel}.

\subsubsection{Probabilistic interpretation via kernel densities}   \label{sec:prob inter}

Besides {\em kernel clustering}, kernels are also commonly used for {\em probability density estimation}. 
This section relates these two independent problems.
Standard \emph{multivariate kernel density estimate} or {\em Parzen density estimate} for the distribution of data points 
within cluster $S^k$ can be  expressed as follows \cite{Bishop06}:  
\begin{equation} \label{eq:Parzen_estimate}
\Pk (x|S^k) \;\; := \;\; \frac{\sum_{q \in S^k}k(x,f_q)}{|S^k|}, 
\end{equation}  
with kernel $k$ having the form:
\begin{equation} \label{eq:density kernel}
k(x,y) \;\;=\;\; |\Sigma|^{-\frac12}\; \rbf \left (\Sigma^{-\frac12}(x-y) \right)
\end{equation}
where $\rbf$ is a symmetric multivariate density and $\Sigma$ is a symmetric positive definite {\em bandwidth} matrix 
controlling the density estimator's smoothness. 
One standard example is the Gaussian (normal) kernel \eqref{eq:gauss_k} corresponding to 
\begin{equation} \label{eq:normal kernel}
       \rbf(t) \;\propto\;  \exp\left (- \frac{\|t\|^2}{2\;} \right ),
\end{equation} 
which is commonly used both in kernel density estimation~\cite{Bishop06} and kernel clustering~\cite{Girolami2002,Shi2000}. 

The choice of bandwidth $\Sigma$ is crucial for accurate density estimation, while the choice of $\psi$ plays only 
a minor role~\cite{scott1992multivariate}. There are numerous works regarding kernel selection for accurate density estimation
using either fixed~\cite{silverman1986density,scott1992multivariate,izenman1991review}
{\NEW or variable bandwidth~\cite{TarrelScott92}.}
 For example, Scott's \emph{rule of thumb} is 
\begin{equation}
\sqrt{\Sigma_{ii}} = \frac{r_i}{\sqrt[N+4]{n}}, \quad\quad \Sigma_{ij}=0  \text{ for } i\ne j 
\end{equation}
where $n$ is the number of points, and $r_i^2$ is the variance of the $i$-th feature that could be interpreted as the range or scale of the data. 
Scott's rule gives optimal {\em mean integrated squared error} for normal data distribution, but in practice it works well in more general settings.
In all cases the optimal bandwidth for sufficiently large datasets 
is a small fraction of the data range~\cite{duda1973pattern2,Bishop06}. 
For shortness, we use adjective \rsmall/ to describe bandwidths providing accurate density estimation.

If kernel $k$ has form~\eqref{eq:density kernel} up to a positive multiplicative constant then kernel K-means objective \eqref{eq:kKmeans3} 
can be expressed in terms of kernel densities \eqref{eq:Parzen_estimate} for points in each cluster~\cite{Girolami2002}:
\begin{equation} \label{eq:Parzen_energy}
F(S) \;\utc\; -\sum_k \sum_{p \in S^k} \Pk (f_p|S^k).
\end{equation}

\subsection{Other clustering criteria and their known biases}\label{sec:other criteria}

{ One of the goals of this paper is a theoretical explanation for the bias of kernel K-means with small bandwidths toward tight dense clusters, which we call {\em \thebias/}, see Figs~\ref{fig:synexp}-\ref{fig:bias example}. This bias was observed in the past only empirically. As discussed in Section~\ref{sec:extreme}, large bandwidth reduces kernel K-means to basic K-means where bias to equal  cardinality clusters is known \cite{UAI:97}.
This section reviews other standard clustering objectives, entropy and Gini criteria, 
that have biases already well-understood theoretically. In Section~\ref{sec:analysis} we establish a connection between
Gini clustering and kernel K-means in case of \rsmall/  kernels. 
This connection allows theoretical analysis of \thebias/ in kernel K-means.
}

\subsubsection{Probabilistic K-means and entropy criterion}

Besides non-parametric kernel K-means clustering there are well-known parametric extensions of basic K-means \eqref{eq:kmeans} 
based on probability models. {\em Probabilistic K-means} \cite{UAI:97} or
{\em model based clustering}~\cite{Fraley2002} use some given likelihood functions $P ( f_p | \theta_k )$ 
instead of distances $\|f_p - \theta_k\|^2$ in \eqref{eq:kmeans} as in clustering objective
\begin{equation} 
\label{eq:pkm_energy}
 -\sum_k \sum_{p \in S^k} \log P ( f_p | \theta_k ).
\end{equation}
Note that objective~\eqref{eq:pkm_energy} reduces to basic K-means \eqref{eq:kmeans} 
for Gaussian probability model $P(.|\theta_k)$ with mean $\theta_k$ and a fixed scalar covariance matrix.

In probabilistic K-means  \eqref{eq:pkm_energy} models can differ from Gaussians depending on {\em a priori} assumptions
about the data in each cluster, \eg gamma, Gibbs, or other distributions can be used. For more complex data, each cluster 
can be described by highly-descriptive parametric models such as Gaussian mixtures (GMM). Instead of kernel density estimates in kernel K-means~\eqref{eq:Parzen_energy}, probabilistic K-means \eqref{eq:pkm_energy} uses parametric distribution models. 
Another difference is the absence of the $\log$ in \eqref{eq:Parzen_energy} compared to \eqref{eq:pkm_energy}. 

The analysis in~\cite{UAI:97} shows that in case of highly descriptive model~$P$, \eg GMM or histograms,~\eqref{eq:pkm_energy} can be approximated by the standard \emph{entropy criterion} for clustering:
\begin{equation}
\label{entropy_criterion}
\left(\parbox{9ex}{\centering\bf\small entropy \\  criterion}\right)\quad\quad\quad \quad\quad \sum_k | S^k | \cdot H ( S^k ) \quad\quad\quad\quad\quad\quad\quad
\end{equation}
where $H ( S^k )$ is the entropy of the distribution of the data in~$S^k$:
$$  H ( S^k ) \;\;:=\;\; -\int  P (x | \theta_k ) \log  P ( x | \theta_k ) \df x. $$
The discrete version of the entropy criterion is widely used for learning binary decision trees 
in classification \cite{breiman1996,Bishop06,Criminisi2013}. {  It is known that the entropy criterion above 
is biased toward equal size clusters \cite{breiman1996,UAI:97,volbias:ICCV15}.}

\subsubsection{Discrete Gini impurity and criterion}\label{sec:gini overview}

Both Gini and entropy clustering criteria are widely used in the context of decision trees \cite{Bishop06,Criminisi2013}. 
These criteria are used to decide the best split at a given node of a binary classification tree \cite{breiman1984classification}. 
The Gini criterion can be written for clustering $\{S^k\}$ as
\begin{align} 
\left(\parbox{13ex}{\centering\bf\small discrete \\ Gini criterion}\right) \quad\quad\quad\quad\quad \sum_k|S^k|\cdot G(S^k) \quad\quad\quad\quad\quad \label{eq:gini_criterion}
\end{align}
where $G(S^k)$ is the {\em Gini impurity} for the points in $S^k$. Assuming discrete feature space $\cal L$ instead of $\Real^N$, the Gini impurity is
\begin{equation} \label{eq:dGini_impurity}
G(S^k)\;:=\;1-\sum_{l\in\cal L} \P (l\,|S^k)^2  
\end{equation}
where $\P(\cdot\,|S^k)$ is the empirical probability (histogram) of discrete-valued features $f_p\in \cal L$ in cluster $S^k$. 

Similarly to the entropy, Gini impurity $G(S^k)$  can be viewed as a measure of sparsity or ``peakedness'' of the distribution for points in $S^k$. Note that \eqref{eq:gini_criterion} has a form similar to the entropy criterion in \eqref{entropy_criterion}, except that entropy $H$ is replaced by the Gini impurity. Breiman~\cite{breiman1996} analyzed the theoretical properties of the discrete Gini criterion \eqref{eq:gini_criterion} when $\P(\cdot\,|S^k)$ are {\em discrete histograms}. He proved 
\begin{tabular}{cp{0.59\columnwidth}} \\[-5ex]
\raisebox{-1.15\height}{\includegraphics[width=0.35\columnwidth,clip,trim=19mm 10cm 19mm 15mm]{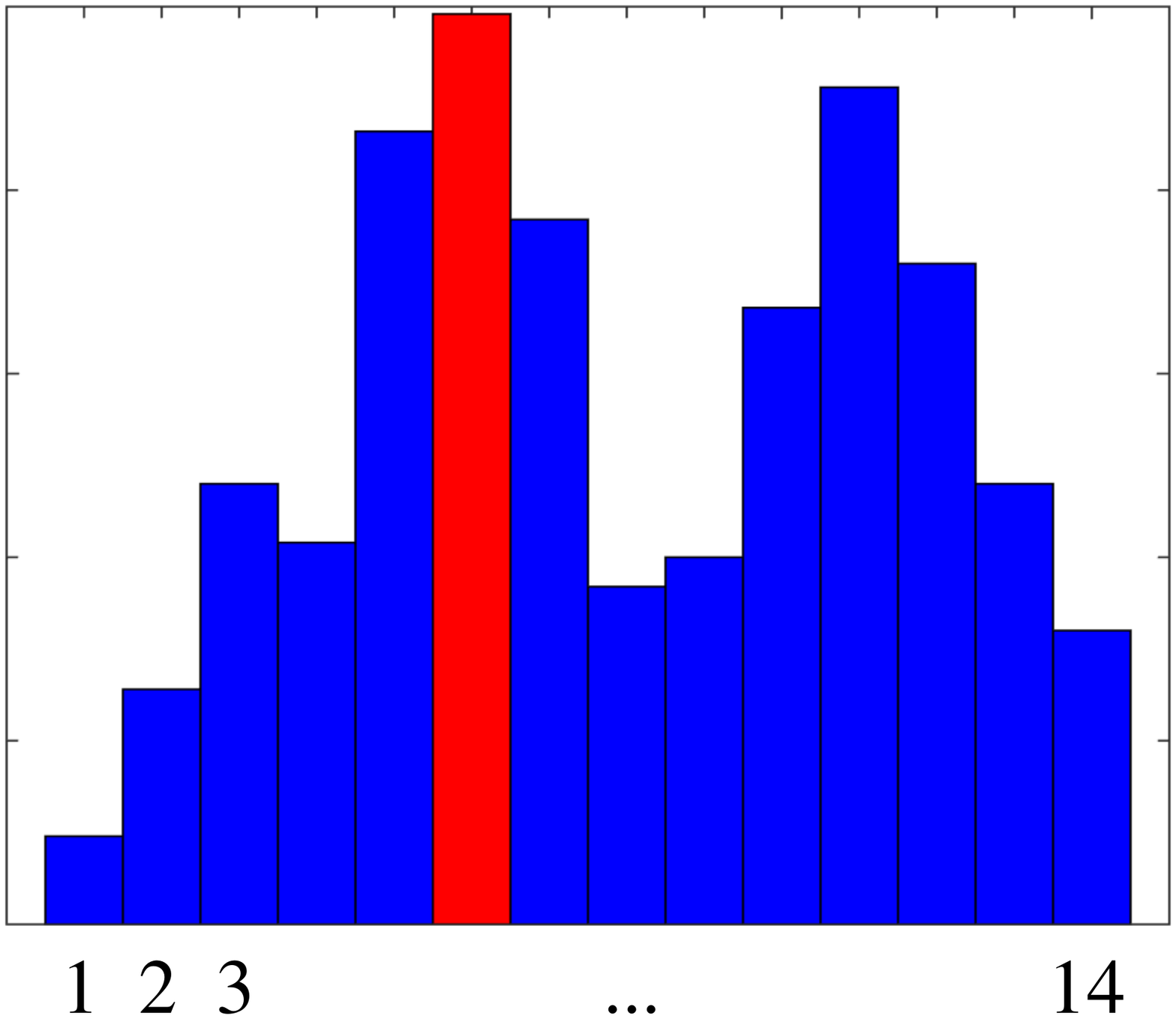}}
&
\begin{theor}[Breiman] \label{th:breiman}
For $K=2$ the minimum of the Gini criterion \eqref{eq:gini_criterion}
for discrete Gini impurity \eqref{eq:dGini_impurity} is achieved by assigning all data points with 
the highest-probability feature value in $\mathcal L$ to one cluster and the remaining data points 
to the other cluster, as in example for $\mathcal L=\{1,\dots,14\}$ on the left. \qed
\end{theor}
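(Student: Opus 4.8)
The plan is to peel the $K=2$ Gini criterion down to a simple ``between-cluster spread'' functional, use convexity to reduce to bipartitions of the feature alphabet $\mathcal L$, and then settle the resulting finite optimization by hand.

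\emph{Step 1 (a variance-type identity).} Let $n=|\Omega|$, $t=|S^1|/n$, and view $q^1:=\P(\cdot\,|S^1)$, $q^2:=\P(\cdot\,|S^2)$ and $p:=\P(\cdot\,|\Omega)$ as vectors indexed by $\mathcal L$, so $p=t\,q^1+(1-t)\,q^2$. Expanding $\|p\|^2$ and using $G(S)=1-\|\P(\cdot\,|S)\|^2$ gives
\[
G(\Omega)\;=\;\bigl(t\,G(S^1)+(1-t)\,G(S^2)\bigr)\;+\;t(1-t)\,\|q^1-q^2\|^2 .
\]
The first summand times $n$ is exactly the criterion \eqref{eq:gini_criterion} at $K=2$, and $nG(\Omega)$ is fixed, so minimizing \eqref{eq:gini_criterion} is equivalent to \emph{maximizing} $\;\tfrac{|S^1|\,|S^2|}{n}\,\|q^1-q^2\|^2\;$ over all $2$-partitions $(S^1,S^2)$.

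\emph{Step 2 (an optimal split is ``pure'').} I would show that some maximizer assigns, for each value $l$, all points with $f_p=l$ to the same cluster. Freeze every point with value $\neq l$ and let $x$ be the number of $l$-valued points in $S^1$; expressing the Step~1 objective through the per-cluster value counts $|\{p\in S^i : f_p=l'\}|$ turns it, up to an additive constant, into $\tfrac{x^2+c_1}{x+b_1}+\tfrac{(n_l-x)^2+c_2}{(n_l-x)+b_2}$ with $b_i,c_i\ge 0$. By polynomial division each term is affine plus a positive multiple of $1/(x+b_i)$, hence convex; the sum is therefore convex on $[0,n_l]$ and maximized at $x\in\{0,n_l\}$. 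Iterating over $l$ reduces the problem to bipartitions $\{A,B\}$ of $\mathcal L$, with $S^1,S^2$ collecting the points whose value lies in $A$, resp.\ $B$.

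\emph{Step 3 (the optimum isolates the mode).} For a pure bipartition, with $n_l$ the number of $l$-valued points, $N_A=\sum_{l\in A}n_l$, $M_A=\sum_{l\in A}n_l^2$, $n=\sum_l n_l$, $M=\sum_l n_l^2$, the Step~1 objective equals $\tfrac1n\bigl(\tfrac{M_A}{N_A}+\tfrac{M_B}{N_B}\bigr)-\tfrac{M}{n^2}$, so one must maximize $R(A):=\tfrac{M_A}{N_A}+\tfrac{M_B}{N_B}$; let $n_1=\max_l n_l$ be the mode count. For single-value splits $R(\{l\})=n_l+\tfrac{M-n_l^2}{n-n_l}$, and $R(\{1\})\ge R(\{l\})$ reduces to $2(n-n_1)(n-n_l)\ge n^2-M$, whose two sides differ by $\bigl(\sum_{m\ne 1,l}n_m\bigr)^2+\sum_{m\ne 1,l}n_m^2\ge 0$. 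For an arbitrary bipartition with $1\in A$, writing $A=\{1\}\cup A_0$, $u=N_{A_0}$, $w=M_{A_0}$, $n'=n-n_1$, $M'=M-n_1^2$, a direct computation gives
\[
R(\{1\})-R(A)\;=\;\frac{n_1u-w}{n_1+u}\;+\;\frac{n'w-M'u}{n'(n'-u)},
\]
where the first term is $\ge 0$ since $w=\sum_{l\in A_0}n_l^2\le n_1\sum_{l\in A_0}n_l=n_1u$; when the second term is negative it is bounded by feeding in $w\le n_2u$ and $M'\le n_2n'$ (with $n_2=\max_{l\ne1}n_l$), and the residual quadratic inequality in $u,w$ is closed by a case analysis. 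This yields $R(A)\le R(\{1\})$ for every $A$, i.e.\ the Gini criterion is minimized by isolating a value of maximal count, which is Theorem~\ref{th:breiman}.

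\emph{Main obstacle.} Steps~1--2 are essentially bookkeeping (one algebraic identity, one one-variable convexity). The crux is Step~3: ruling out \emph{all} mixed bipartitions. The comparison there reduces to a two-variable rational inequality in the mass $u$ and ``energy'' $w$ of the sub-alphabet merged with the mode, but $u$ and $w$ are not independent parameters, so it is not an identity and must be driven through with the structural bounds (e.g.\ $w\le n_2u$, $M'\le n_2n'$, $w\ge u$, $M_B\le N_B^2$) together with a split on the sign of $n_1+2u-n'$ and on whether $N_B<n_1$. An alternative I would also try is the observation that for a fixed $N_A$ the optimal $A$ packs the largest counts onto the lighter side, which collapses the search to the $|\mathcal L|$ ``threshold'' bipartitions -- at the price of essentially the same case analysis.
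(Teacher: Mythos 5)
The paper itself gives no proof of Theorem~\ref{th:breiman}: the statement is quoted from Breiman \cite{breiman1996} and closed with a \qed, and the only argument the paper actually supplies is for the \emph{continuous} analogue (Theorem~\ref{th:gini_bias}), proved via the ratio Lemma~\ref{lm:ratio-law} and a limiting argument on superlevel sets of the density. Your proposal is therefore a genuinely different, self-contained discrete proof, and its skeleton checks out: the variance decomposition in Step~1 is a correct identity; the convexity argument in Step~2 (each term is affine plus a positive multiple of $1/(x+b_i)$) correctly reduces the search to pure bipartitions of $\mathcal L$; and both the single-value comparison and the two-term identity for $R(\{1\})-R(A)$ in Step~3 are algebraically right. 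The one piece you leave open --- the residual inequality for mixed bipartitions --- does close, and more cleanly than your ``main obstacle'' paragraph suggests: with $\theta_A:=w/u=M_{A_0}/N_{A_0}$ and $\theta_B:=M_B/N_B$ one computes $R(\{1\})-R(A)=u\bigl[\tfrac{n_1-\theta_A}{n_1+u}-\tfrac{\theta_B-\theta_A}{u+N_B}\bigr]$, so it suffices to show $(n_1-\theta_A)(u+N_B)\ge(\theta_B-\theta_A)(n_1+u)$, which follows from only $\theta_A\le n_1$, $\theta_B\le n_1$ and $\theta_B\le N_B$ (each cluster's size-biased mean of counts lies below the maximal count $n_1$ and below its own total mass), since
\[
(n_1-\theta_A)(u+N_B)\;\ge\;(n_1-\theta_A)(u+\theta_B)\;=\;(\theta_B-\theta_A)(n_1+u)+(u+\theta_A)(n_1-\theta_B)
\]
and the last summand is nonnegative --- no sign splits on $n_1+2u-n'$ or on $N_B<n_1$ are needed. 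It is worth noting that these sandwich bounds on the ratios $M_C/N_C$ are exact discrete counterparts of \eqref{eq:gini:ratio} and \eqref{eq:gini:up-incr}, so your route and the paper's continuous argument ultimately lean on the same mechanism; what your version buys is a complete elementary proof of the discrete statement the paper only cites, while the paper's version buys generality (arbitrary densities rather than a finite alphabet) at the price of identifying the optimum only as a supremum attained in the limit.
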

\end{tabular}


\section{Breiman's bias (numerical features)} \label{sec:analysis}

In this section we show that the kernel K-means objective reduces to a {\NEW novel} \emph{continuous} Gini criterion under some general conditions on the kernel function, {\NEW see Section~\ref{sec:kKM=Gini}}. We formally prove {\NEW in Section~\ref{app:gini}} that the optimum of the continuous Gini criterion isolates the data density mode. That is, we show that the discussed earlier biases observed in the context of clustering~\cite{Shi2000} and decision tree learning~\cite{breiman1996} are the same phenomena. 
{\NEW Section~\ref{sec:dsets} establishes connection to  maximum cliques~\cite{motzkin1965maxima} and {\em dominant sets}~\cite{Marcello2007}}.

{  For further analysis we reformulate the problem of clustering a discrete set of points $\{f_p\,|\,p\in\Omega\}\subset\Real^N$, see Section~\ref{sec:kkm overview},
as a continuous domain clustering problem. 
{\NEW Let $P$ be a probability measure over domain $\Real^N$ and $\rho$ be the corresponding continuous probability density function} such that the discrete points $f_p$ could be treated as samples from this distribution.
The clustering of the continuous domain will be described by an \emph{assignment function} $\af: \Real^N \to \{1,2,\dots,K\}$.
Density $\rho$ implies conditional probability densities $\dSk(x):=\rho(x\,|\,\af(x)=k)$.
Feature points $f_p$ in cluster $S^k$ could be interpreted as a sample from conditional density $\dSk$. 

Then, the continuous clustering problem is to find an assignment function optimizing a clustering criteria. For example, we can analogously to~\eqref{eq:gini_criterion} define continuous Gini clustering criterion
\begin{equation}\label{eq:cont gini criterion}
\left(\parbox{13ex}{\centering\bf\small continuous \\ Gini criterion}\right)\quad\quad\quad\quad
\sum_k w_k \cdot G(\af,k), \quad\quad\quad\quad\quad\quad
\end{equation}
where $w_k$ is the probability to draw a point from $k$-th cluster and
\begin{equation} \label{eq:gini_impurity}
G(\af,k)\;:=\; 1-\int \dSk (x)^2 \df x.   
\end{equation}

In the next section we show that kernel K-means energy~\eqref{eq:Parzen_energy} can be approximated by continuous Gini-clustering criterion~\eqref{eq:cont gini criterion} for \rsmall/ kernels.}

\subsection{Kernel K-means and continuous Gini criterion} \label{sec:kKM=Gini}

To establish the connection between kernel clustering and the Gini criterion, let us first recall Monte-Carlo estimation~\cite{UAI:97}, which yields the following expectation-based approximation for a continuous function $g(x)$ and cluster $C\subset\Omega$:   
\begin{equation} \label{eq:Monte-Carlo}
\sum_{p \in C} g(f_p) \approx |C| \int g(x) \, \rho_C(x) \, \df x
\end{equation}
where $\rho_C$ is the ``true'' continuous density of features in cluster $C$. 
Using \eqref{eq:Monte-Carlo} for $C=S^k$ and $g(x)=\Pk(x|S^k)$, we can approximate
the kernel density formulation in \eqref{eq:Parzen_energy} by its expectation
\begin{equation} \label{eq:parzen_criterion} 
F(S) \;\approxutc \; - \sum_k |S^k|  \int \Pk(x|S^k) \, \dSk(x) \, \df x .
\end{equation} 
Note that partition $S=(S^1,\dots,S^K)$ is determined by dataset $\Omega$ and assignment function $\af$. 
{\NEW We also assume
\begin{equation} \label{eq:density_approx}
 \Pk(\cdot\,|S^k) \;\;\approx\;\; \dSk(\cdot).
\end{equation}
This is essentially an assumption on kernel bandwidth. That is, we assume that kernel bandwidth 
gives accurate density estimation. For shortness, we call such bandwidths \rsmall/, see Section \ref{sec:prob inter}.
Then \eqref{eq:parzen_criterion} reduces} to approximation 
\begin{equation} 
F(S) \;\; \approxutc \;\; - \sum_k  |S^k|\cdot \int \dSk(x)^2 \df x \;\; \equivutc \;\; \sum_k  |S^k|\cdot G(\af,k).
\label{eq:gini approx}
\end{equation}
Additional application of Monte-Carlo estimation $|S^k|/|\Omega|\approx w_k$ allows replacing set cardinality $|S^k|$ by probability $w_k$ of drawing a point from $S^k$. This results in continuous Gini clustering criterion~\eqref{eq:cont gini criterion}, which approximates~\eqref{eq:Parzen_energy} or~\eqref{eq:kKmeans3} up to an additive and positive multiplicative constants.

Next section proves that the continuous Gini criterion~\eqref{eq:cont gini criterion} has a similar bias observed by Breiman in the discrete case.

\subsection{Breiman's bias in continuous Gini criterion}  \label{app:gini}

This section extends Theorem~\ref{th:breiman} to continuous Gini criterion \eqref{eq:cont gini criterion}. 
Since Section~\ref{sec:kKM=Gini} has already established 
a close relation between continuous Gini criterion and kernel K-means for \rsmall/ bandwidth kernels, then 
Breiman's bias also applies to the latter. {\NEW For simplicity, we focus on $K=2$ as in Breiman's Theorem~\ref{th:breiman}.}

  \begin{theor} [\thebias/ in continuous case] \label{th:gini_bias}
For $K=2$ the continuous Gini clustering criterion \eqref{eq:cont gini criterion} achieves its optimal value at the partitioning of $\Real^N$ into regions  $$\af_1=\arg\max_x \rho(x) \quad\text{and}\quad \af_2=\Real^N \setminus \af_1.$$
\end{theor}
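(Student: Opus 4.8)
The idea is to rewrite the continuous Gini criterion so that a simple pointwise bound becomes visible, then show that isolating the density peak attains that bound. First I would introduce the clusters $A_k := \{x\in\Real^N : \af(x)=k\}$, so $A_2 = \Real^N\setminus A_1$ and $w_k = \int_{A_k}\rho(x)\df x$, and note $\dSk(x) = \rho(x)\,\mathbf{1}_{A_k}(x)/w_k$. A one-line computation then gives $w_k\,G(\af,k) = w_k - \frac{1}{w_k}\int_{A_k}\rho^2$, and summing over $k=1,2$ with $w_1+w_2=1$ turns \eqref{eq:cont gini criterion} into
$$\sum_k w_k\,G(\af,k) \;=\; 1 \;-\; \left(\frac{\int_{A_1}\rho^2\df x}{\int_{A_1}\rho\df x} \;+\; \frac{\int_{A_2}\rho^2\df x}{\int_{A_2}\rho\df x}\right) \;=:\; 1 - \Phi(A_1).$$
So minimizing the Gini criterion is the same as maximizing $\Phi(A_1)$ over partitions with $w_1,w_2>0$.

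Next I would prove the uniform bound $\Phi(A_1)\le M+I$, where $M:=\sup_x\rho(x)$ and $I:=\int\rho^2\df x$. The ingredients are elementary: (i) for any measurable $A$, $\int_A\rho^2 \le M\int_A\rho$, so in particular $I\le M$; and (ii) with $a:=\int_{A_1}\rho^2$, $b:=\int_{A_2}\rho^2$, $p:=w_1$, $q:=w_2$ (so $a+b=I$, $p+q=1$), the algebraic identity $\frac ap+\frac bq = (M+I) + \bigl(a\tfrac qp + b\tfrac pq - M\bigr)$ combined with $a\tfrac qp\le Mq$ and $b\tfrac pq\le Mp$, which add up to $M$, forces the parenthesized error term to be $\le 0$. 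This step also shows equality is possible only when $(M-\rho)\rho = 0$ a.e., i.e.\ $\rho$ is uniform on its support — which explains why a genuine (non-degenerate) optimum generally does not exist and one must pass to a limit.

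Then I would verify that the claimed partition realizes the bound. Assuming $\rho$ continuous with maximum $M$ attained at a point $x^*\in\af_1=\arg\max\rho$, and taking shrinking neighbourhoods $A_1^\varepsilon = B(x^*,\varepsilon)$ (necessary because $\af_1$ is typically a null set), continuity of $\rho$ gives $\int_{A_1^\varepsilon}\rho^2/\int_{A_1^\varepsilon}\rho \to \rho(x^*)=M$ and $\int_{A_2^\varepsilon}\rho^2/\int_{A_2^\varepsilon}\rho \to \int\rho^2/\int\rho = I$, hence $\Phi(A_1^\varepsilon)\to M+I$ and \eqref{eq:cont gini criterion} tends to $1-(M+I)$, matching the lower bound from the previous step (and if $\arg\max\rho$ has positive measure the same computation applies directly). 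Combining the two steps yields the theorem, in exact parallel with Breiman's Theorem~\ref{th:breiman} where the highest histogram bin plays the role of the density mode.

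I expect the main obstacle to be bookkeeping around this degeneracy rather than any hard inequality: the optimal "cluster" collapses onto the density peak, so $\dSk$ and $G(\af,k)$ are literally undefined there, and the honest statement is that $1-M-I$ is the infimum of \eqref{eq:cont gini criterion} over non-trivial partitions, approached by isolating an ever-smaller clump at the mode. Making this precise needs the limiting argument above plus mild regularity on $\rho$ (continuity and $\sup\rho<\infty$, which also guarantees $I\le M<\infty$ so all integrals are finite); the core inequality $\Phi\le M+I$ itself is just a couple of lines once the reformulation $\sum_k w_kG(\af,k)=1-\Phi(A_1)$ is in place.
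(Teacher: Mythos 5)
Your proposal is correct, and its skeleton coincides with the paper's: both reduce minimizing \eqref{eq:cont gini criterion} to maximizing the same quantity (your $\Phi(A_1)$ is literally the paper's $L(\af)=\frac{\E I\rho}{\E I}+\frac{\E(1-I)\rho}{1-\E I}$ written with integrals), both establish the upper bound $\sup_x\rho(x)+\int\rho^2\df x$, and both realize it only in the limit of a shrinking cluster at the mode — the paper via superlevel sets $\{\rho\ge\sup\rho-\varepsilon\}$ (its $\af_\varepsilon$), you via balls $B(x^*,\varepsilon)$, which additionally needs $\rho$ continuous with an attained maximum. Where you genuinely diverge is the proof of the central inequality. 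The paper assumes WLOG an ordering of the two ratios, invokes the mediant inequality (its Lemma~\ref{lm:ratio-law}) to sandwich $\E\rho$ between them, and bounds the larger ratio by $\sup\rho$; you instead use the symmetric identity $\frac ap+\frac bq=(a+b)+a\frac qp+b\frac pq$ together with the pointwise bound $\int_A\rho^2\le M\int_A\rho$ applied to each cross term. Your route avoids the WLOG step and the auxiliary lemma, and it yields the equality condition ($\rho$ uniform on its support) as a free byproduct, which the paper does not state; the paper's mediant argument, on the other hand, is what it later reuses in Section~\ref{sec:dsets} to handle the one-ratio objective \eqref{eq:dsetl} for dominant sets, so it earns its keep there. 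Your explicit acknowledgement that the optimum is an infimum attained only in the limit is exactly the content of the paper's equation \eqref{eq:gini:main} and of its ``practical considerations'' remark, so no gap there.
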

\begin{proof}
The statement follows from Lemma~\ref{prop:appendixD} below.
\end{proof} 

We denote mathematical expectation of function $z\;:\;\Omega\to{\mathcal R}^1$
$$\E z  \; := \int z(x)\rho(x) \df x.$$

Minimization of \eqref{eq:cont gini criterion} corresponds to maximization of the following objective function
\begin{equation} \label{eq:gini:objective}
    L(\af) \;\; := \;\; w \int \dS(x)^2 \df x \;+\; (1-w) \int \dSbar(x)^2 \df x
\end{equation}
where {\NEW the probability to draw a point from cluster $1$ is}
$$w \;\; := \;\; w_1 \;\; = \;\;  \int_{\af(x)=1} \rho(x) \df x=\E [\af(x)=1] $$
where $[\cdot]$ is the indicator function.
{\NEW Note that \emph{mixed joint density} 
$$ \rho(x,k) \; = \; \rho(x) \cdot [s(x)=k] $$
allows to write conditional density $\dS$ in \eqref{eq:gini:objective} as
\begin{equation}    \label{eq:gini:conditional}
    \dS(x)\;=\; \frac{\rho(x,1)}{P(s(x) = 1)} \; = \; \rho(x)\cdot \frac{[\af(x)=1]}{w}.
\end{equation} }
 Equations \eqref{eq:gini:objective} and \eqref{eq:gini:conditional} give
\begin{align}
L(\af) \;\; =\;\; &\frac1{w}\int \rho(x)^2[\af(x)=1] \df x \notag \\
& + \; \frac1{1-w}\int \rho(x)^2[\af(x)=2]\df x.\end{align}
Introducing notation 
$$I \;\; := \;\; [\af(x)=1] \;\;\;\;\;\;\mbox{and}\;\;\;\;\;\; \rho \;\; := \;\; \rho(x)$$ 
allows to further rewrite objective function $L(\af)$ as
\begin{equation} \label{eq:l}
L(\af) \;\;=\;\; \frac{\E I\rho}{\E I} \; + \; \frac{\E (1- I)\rho}{1-\E I}.
\end{equation}

Without loss of generality assume that
$\frac{\E (1- I)\rho}{1-\E I} \le \frac{\E I \rho}{\E I}$ (the opposite case would yield a similar result). We now need following 
\begin{lem}\label{lm:ratio-law} 
Let $a,b,c,d$ be some positive numbers, then $$\frac ab \le \frac cd \implies \frac ab \le \frac{a+c}{b+d} \le \frac cd.$$
\begin{proof} Use reduction to a common denominator. \end{proof}
\end{lem}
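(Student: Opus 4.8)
The plan is to clear all denominators and reduce both target inequalities to a single cross-multiplied form of the hypothesis. Since $a,b,c,d$ are all strictly positive, every denominator that appears—namely $b$, $d$, and $b+d$—is positive, so multiplying an inequality between fractions by any of these (or by their products) preserves the direction of the inequality. This sign fact is the only ingredient I need, and it lets me translate each fractional comparison into an equivalent polynomial one.

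First I would rewrite the hypothesis. Multiplying $\frac{a}{b} \le \frac{c}{d}$ by $bd > 0$ yields the equivalent statement $ad \le bc$, which I will use as the single workhorse inequality throughout. Next I would establish the left inequality $\frac{a}{b} \le \frac{a+c}{b+d}$: multiplying through by $b(b+d) > 0$ turns it into $a(b+d) \le b(a+c)$, and expanding both sides and cancelling the common term $ab$ leaves exactly $ad \le bc$, which holds by hypothesis. I would then handle the right inequality $\frac{a+c}{b+d} \le \frac{c}{d}$ in the same fashion: multiplying by $d(b+d) > 0$ gives $d(a+c) \le c(b+d)$, and after expanding and cancelling the common term $cd$ this again reduces to $ad \le bc$. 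Hence both desired inequalities are equivalent to the hypothesis and therefore hold simultaneously, which is the assertion of the lemma.

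There is no genuine obstacle here; the argument is routine once positivity is in hand. The only point requiring the slightest care is confirming that $b+d > 0$, so that the mediant $\frac{a+c}{b+d}$ is well defined and cross-multiplication by $b+d$ is sign-preserving—both of which follow immediately from $b,d > 0$. It is worth recording that the two bounds are in fact sharp in the sense that each collapses to equality precisely when $ad = bc$, i.e. when $\frac{a}{b} = \frac{c}{d}$, consistent with the mediant lying strictly between the two ratios whenever they differ.
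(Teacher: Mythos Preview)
Your proof is correct and follows precisely the approach indicated in the paper, which merely hints ``use reduction to a common denominator''; you have spelled out that hint in full by cross-multiplying and reducing both inequalities to the equivalent condition $ad \le bc$.
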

\noindent Lemma \ref{lm:ratio-law} implies inequality
\begin{equation} \label{eq:gini:ratio}
\frac{\E (1- I)\rho}{1-\E I} \;\; \le \;\; \E\rho \;\; \le \;\; \frac{\E I\rho}{\E I},
\end{equation}
which is needed to prove the Lemma below.

\begin{lem}   \label{prop:appendixD}
Assume that function $\af_\varepsilon$ is
\begin{equation} \label{eq:se}
\af_\varepsilon(x) \;\; := \;\; \begin{cases}1, & \rho(x) \ge \sup_x \rho(x)-\varepsilon, 
\\ 
2, & \text{otherwise}. \end{cases}
\end{equation}
Then
\begin{equation} \label{eq:gini:main}
\sup_\af L(\af)=\lim_{\varepsilon\to0} L(\af_\varepsilon)=\E \rho + \sup_x \rho(x).
\end{equation}
 
\begin{proof} Due to monotonicity of expectation we have
\begin{align}
&  \frac{\E I \rho}{\E I} \le \frac{\E \left(I \sup_x \rho(x)\right)}{\E I} =
 \sup_x \rho(x). \label{eq:gini:up-incr}
\end{align}
Then \eqref{eq:gini:ratio} and \eqref{eq:gini:up-incr} imply
\begin{align}
L(\af) &= \frac{\E I\rho}{\E I} + \frac{\E (1- I)\rho}{1-\E I} \le \sup_x \rho(x)+\E \rho. \label{eq:gini:bound}
\end{align}
That is, the right part of \eqref{eq:gini:main} is an upper bound for $L(\af)$.

Let $I_\varepsilon\equiv[\af_\varepsilon(x)=1]$. It is easy to check that
\begin{equation} \label{eq:lim1}
\lim_{\varepsilon\to0}\frac{\E (1- I_\varepsilon) \rho}{1-\E I_\varepsilon}=\E \rho.
\end{equation}
Definition \eqref{eq:se} also implies 
\begin{equation} \lim_{\varepsilon\to0}\frac{\E I_\varepsilon \rho}{\E I_\varepsilon} \ge
 \lim_{\varepsilon\to0}\frac{\E (\sup_x \rho(x)-\varepsilon)  I_\varepsilon}{\E I_\varepsilon} =\sup_x \rho(x).\end{equation} 
This result and \eqref{eq:gini:up-incr} conclude that 
\begin{equation} \label{eq:lim2}
\lim_{\varepsilon\to0}\frac{\E I_\varepsilon \rho}{\E I_\varepsilon}=\sup_x \rho(x).
\end{equation}
Finally, the limits in \eqref{eq:lim1} and \eqref{eq:lim2} imply 
\begin{align}
\lim_{\varepsilon\to0} L(\af_\varepsilon)&=  \lim_{\varepsilon\to0} \frac{\E (1- I_\varepsilon) \rho}{1-\E I_\varepsilon}+\lim_{\varepsilon\to0} \frac{\E I_\varepsilon \rho}{\E I_\varepsilon}  \notag \\
&= \E \rho + \sup_x \rho(x).
\end{align}
This equality and bound \eqref{eq:gini:bound} prove \eqref{eq:gini:main}.
\end{proof}
\end{lem}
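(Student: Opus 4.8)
The plan is to establish the two equalities in \eqref{eq:gini:main} in turn: first that $\E\rho+\sup_x\rho(x)$ is an upper bound for $L(\af)$ over \emph{all} assignment functions $\af$, and then that the family $\af_\varepsilon$ from \eqref{eq:se} drives $L$ to that value as $\varepsilon\to0$. For the upper bound I would start from the two-ratio form $L(\af)=\frac{\E I\rho}{\E I}+\frac{\E(1-I)\rho}{1-\E I}$ with $I=[\af(x)=1]$ and, relabelling the two clusters if necessary, assume cluster $1$ is the denser one, i.e.\ $\frac{\E(1-I)\rho}{1-\E I}\le\frac{\E I\rho}{\E I}$. Applying Lemma~\ref{lm:ratio-law} with $(a,b,c,d)=(\E(1-I)\rho,\,1-\E I,\,\E I\rho,\,\E I)$ --- whose common-denominator combination $\frac{a+c}{b+d}$ equals $\frac{\E\rho}{1}=\E\rho$ by linearity --- sandwiches $\E\rho$ between the two cluster averages, which is exactly \eqref{eq:gini:ratio}. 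Since $I\rho\le I\sup_x\rho(x)$ pointwise, monotonicity of expectation gives $\frac{\E I\rho}{\E I}\le\sup_x\rho(x)$, and adding this to $\frac{\E(1-I)\rho}{1-\E I}\le\E\rho$ yields $L(\af)\le\sup_x\rho(x)+\E\rho$.

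For the second equality I would substitute $I_\varepsilon=[\rho(x)\ge\sup_x\rho(x)-\varepsilon]$ and evaluate the two ratios as $\varepsilon\to0$. The first ratio obeys the squeeze $\sup_x\rho(x)-\varepsilon\le\frac{\E I_\varepsilon\rho}{\E I_\varepsilon}\le\sup_x\rho(x)$ --- the left inequality because $\rho\ge\sup_x\rho(x)-\varepsilon$ wherever $I_\varepsilon=1$, the right one by the same monotonicity argument as above --- so $\frac{\E I_\varepsilon\rho}{\E I_\varepsilon}\to\sup_x\rho(x)$, i.e.\ \eqref{eq:lim2}. For the second ratio, as $\varepsilon\to0$ the indicator $I_\varepsilon$ decreases to the indicator of $\{\rho=\sup_x\rho\}$, whose $P$-mass is $0$ under the mild regularity we assume on $\rho$; hence $\E I_\varepsilon\to0$, $\E(1-I_\varepsilon)\rho\to\E\rho$, and $\frac{\E(1-I_\varepsilon)\rho}{1-\E I_\varepsilon}\to\E\rho$, which is \eqref{eq:lim1}. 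Summing the two limits gives $L(\af_\varepsilon)\to\E\rho+\sup_x\rho(x)$, matching the upper bound just proved, so $\sup_\af L(\af)$ equals this value and is realized in the limit by $\af_\varepsilon$; Theorem~\ref{th:gini_bias} then follows because the region $\{\af_\varepsilon=1\}$ shrinks onto $\arg\max_x\rho(x)$.

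The calculations are routine; the care goes into the hypotheses. I would state explicitly that $\rho$ is bounded, so $\sup_x\rho(x)<\infty$, and that for every $\varepsilon>0$ the super-level set $\{\rho\ge\sup_x\rho(x)-\varepsilon\}$ carries positive $P$-mass, so that $\E I_\varepsilon>0$ and the ratios defining $L(\af_\varepsilon)$ make sense --- both automatic for a continuous Parzen-type density. The one genuine subtlety is the step $\E I_\varepsilon\to0$ behind \eqref{eq:lim1}: it says the density mode itself carries no mass, which holds when $\rho$ attains its supremum on a $P$-null set (e.g.\ a continuous $\rho$ with isolated peaks) but would fail for a ``flat-topped'' density, in which case the limiting value and the description of the optimal partition change accordingly. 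This is also why the statement is phrased with a limit rather than an attained maximum --- no single $\af$ realizes $\sup_\af L(\af)$ when the optimal region $\af_1=\arg\max_x\rho$ is $P$-null --- and it is exactly the continuous counterpart of the mode isolation in Breiman's Theorem~\ref{th:breiman}.
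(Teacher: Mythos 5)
Your proposal is correct and follows essentially the same route as the paper: the same use of Lemma~\ref{lm:ratio-law} to sandwich $\E\rho$, the same monotonicity bound $\frac{\E I\rho}{\E I}\le\sup_x\rho(x)$ for the upper bound, and the same squeeze on the two ratios for $\af_\varepsilon$. Your only addition is to make explicit the regularity hypotheses (positive mass of every super-level set, and the mode being $P$-null so that $\E I_\varepsilon\to0$) that the paper leaves implicit behind ``it is easy to check'' in \eqref{eq:lim1}; this is a worthwhile clarification but not a different argument.
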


{  This result states that the optimal assignment function separates the mode of the density function from the rest of the data. The proof considers case $K=2$ for continuous Gini criterion approximating kernel K-means for \rsmall/ kernels. 
{\NEW The multi-cluster version for $K>2$ also has \thebias/. Indeed, it is easy to show that any two clusters in the optimal solution shall give optimum of objective~\eqref{eq:cont gini criterion}. Then, these two clusters are also subject to \thebias/. See a multi-cluster example in Figure~\ref{fig:labelme}.}

{\NEW \textbf{ Practical considerations:} While Theorem  \ref{th:gini_bias} suggests that the isolated density mode should be
a single point, in practice \thebias/ in kernel k-means isolates a slightly wider cluster around the mode, 
see Figures~\ref{fig:bias example}, \ref{fig:labelme}, \ref{fig:nash}(a-d), \ref{fig:grabcutexamples}.
Indeed, \thebias/ holds for kernel k-means when the assumptions in 
Section~\ref{sec:kKM=Gini} are valid. In practice, shrinking of the clusters invalidates 
approximations~\eqref{eq:parzen_criterion} and~\eqref{eq:density_approx} preventing the collapse of the clusters. }

\subsection{\NEW Connection to maximal cliques and dominant sets}\label{sec:dsets}

Interestingly, there is also a relation between {\em maximum cliques} and {\em density modes}. 
Assume $0$-\!$1$ kernel $[\|x-y\| \le \sigma]$ with bandwidth $\sigma$. Then, kernel 
matrix $A$ is a connectivity matrix corresponding to a $\sigma$\emph{-disk graph}. 
Intuitively, the maximum clique on this graph should be inside a disk 
with the largest number of points in it, which corresponds to the density mode.

Formally, mode isolation bias can be linked to both maximum clique and its weighted-graph generalization, {\em dominant set} \cite{Marcello2007}. It is known that
maximum clique \cite{motzkin1965maxima} and {\em dominant set} \cite{Marcello2007}
solve a two-region clustering problem with energy
\begin{equation} \label{eq:dset}
-\frac{\sum_{pq \in S^1} A_{pq}}{|S^1|}
\end{equation}
corresponding to average association~\eqref{eq:AA} for $K=1$ and $S^1 \subseteq \Omega$. 
Under the same assumptions as above, Gini impurity~\eqref{eq:gini_impurity} can be used as an approximation
reducing objective \eqref{eq:dset} to
\begin{equation} \label{eq:dsetl}
\frac{\E I\rho}{\E I}.
\end{equation}
Using \eqref{eq:gini:up-incr} and \eqref{eq:lim2} we can conclude that the optimum of 
\eqref{eq:dsetl} isolates the mode of density function~$\rho$. 
Thus, clustering minimizing \eqref{eq:dset} for \rsmall/ bandwidths also has \thebias/.
That is, for such bandwidths the concepts of maximum clique and dominant set for graphs 
correspond to the concept of {\em mode isolation} for data densities.
Dominant sets for the examples in Figures~\ref{fig:synexp}(c), \ref{fig:bias example}(a), and \ref{fig:nash}(d)
would be similar to the shown mode-isolating solutions. 
}

\begin{figure}[t]
\centering

\begin{tabular}{@{}c@{}c@{}}
\includegraphics[width=0.49\columnwidth,trim=3cm 5mm 3cm 3cm,clip]{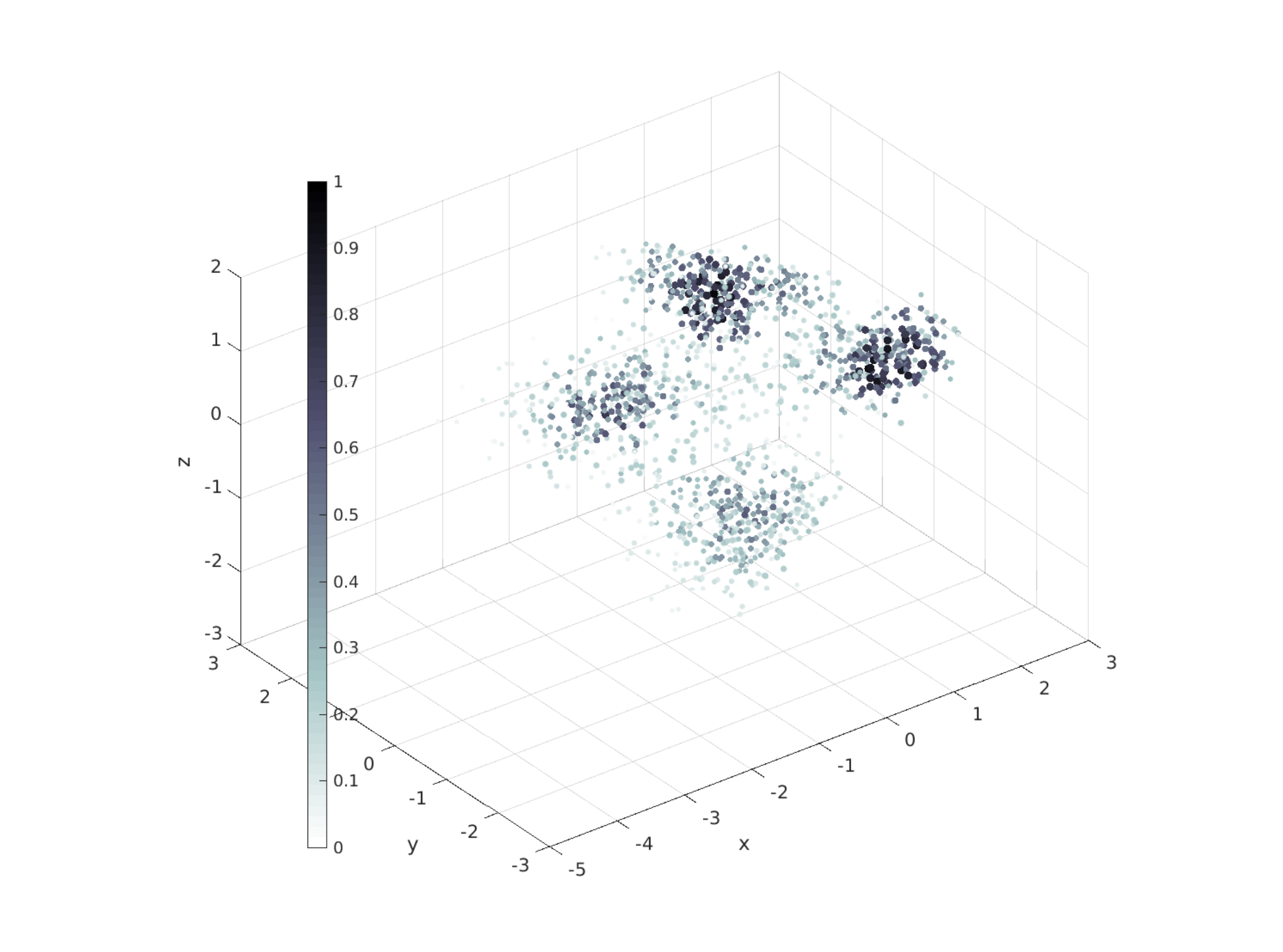} &
\includegraphics[width=0.49\columnwidth,trim=3cm 5mm 3cm 3cm,clip]{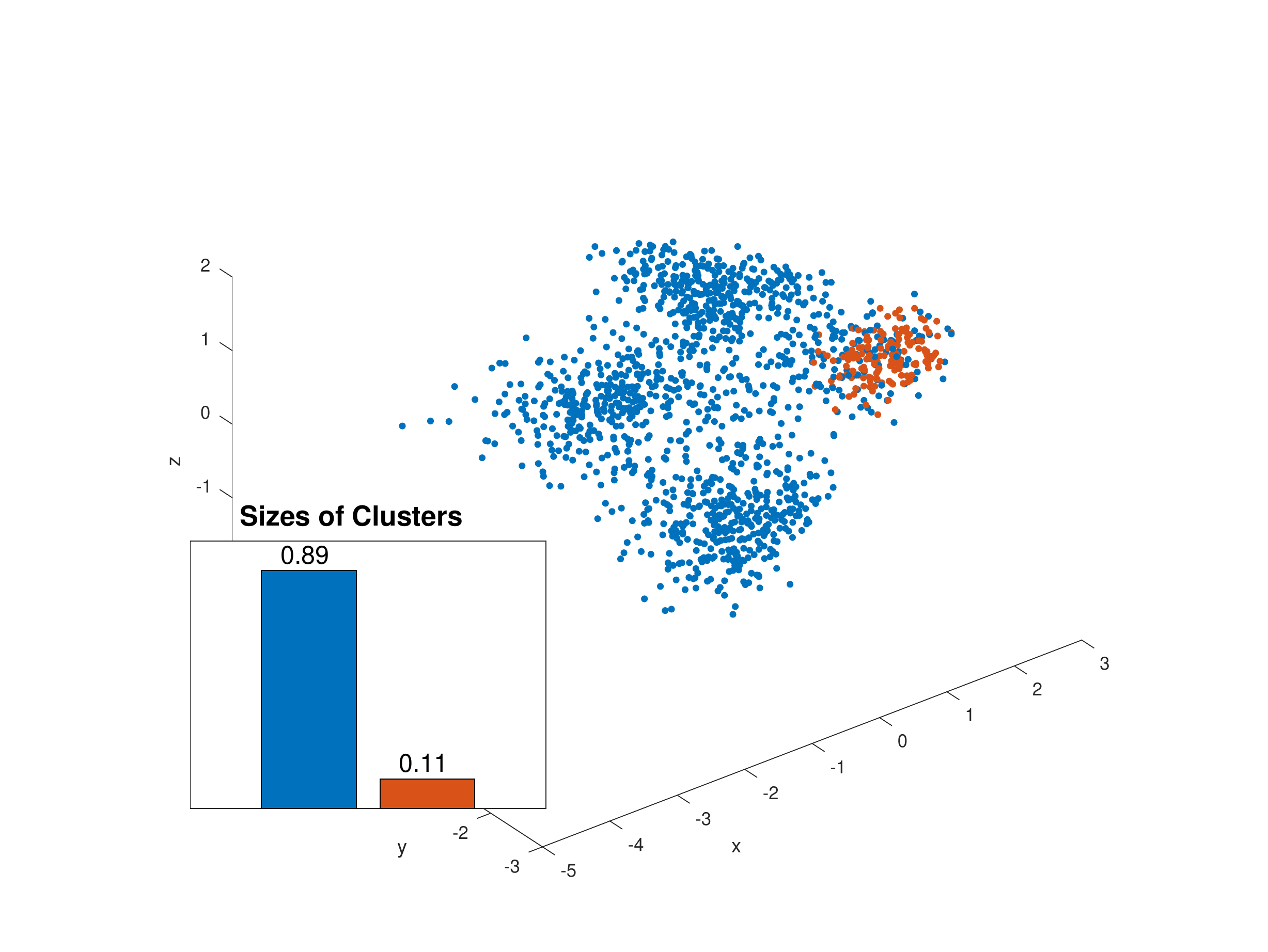}\\[-1ex]
(a) density & (b) Gaussian kernel, 2 clusters \\
\includegraphics[width=0.49\columnwidth,trim=3cm 5mm 3cm 3cm,clip]{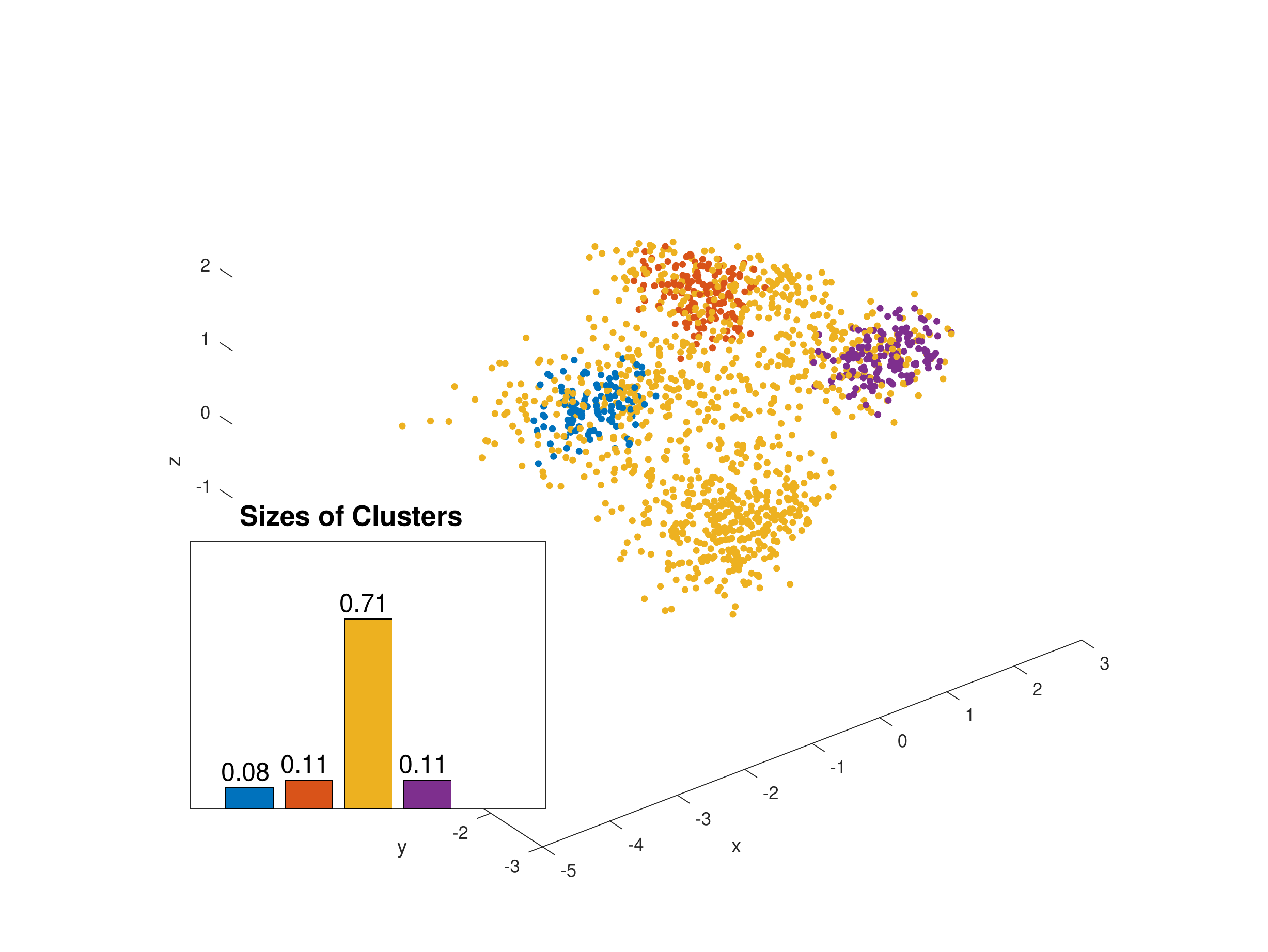} &
\includegraphics[width=0.49\columnwidth,trim=3cm 5mm 3cm 3cm,clip]{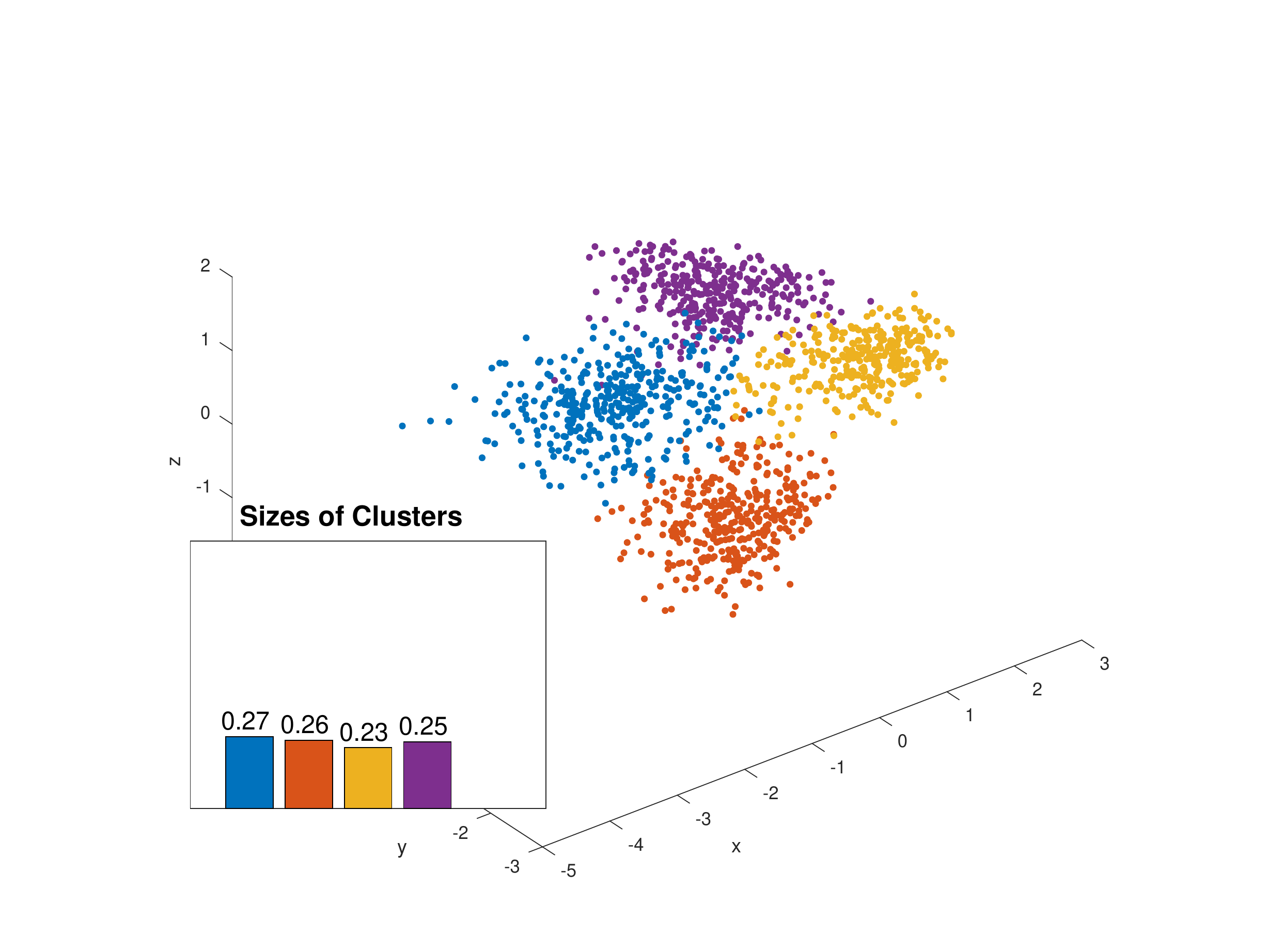} \\[-1ex]
(c) Gaussian kernel, 4 clusters & (d) \KNN/ kernel, 4 clusters \\[-1ex]
\end{tabular}
\caption{
\thebias/ in clustering of images. We select 4 categories from the LabelMe dataset \cite{oliva2001modeling}. 
The last fully connected layer of the neural network in \cite{krizhevsky2012imagenet} gives 4096-dimensional feature vector for each image. 
We reduce the dimension to 5 via PCA. For visualization purposes, we obtain 3D embeddings via MDS~\cite{cox2000mds}.
(a) Kernel densities estimates for data points are color-coded: darker points correspond to higher density. 
(b,c) The result of the kernel K-means with the Gaussian kernel \eqref{eq:gauss_k}. Scott's rule of thumb defines the bandwidth. 
\thebias/ causes poor clustering, \ie small cluster is formed in the densest part of the data in (b), 
three clusters occupy few points within densest regions while the fourth cluster contains 71\% of the data in (c). 
The \emph{normalized mutual information} (NMI) in (c) is 0.38.
(d) Good clustering produced by \KNN/ kernel $u_p$ (Example \ref{ex:KNN kernel}) gives NMI of 0.90, which is slightly better than the basic K-means (0.89).
}
\label{fig:labelme}
\end{figure}

\section{Adaptive weights solving \thebias/} \label{sec:adaptive weighting}

{

We can use a simple modification of average association by introducing weights $w_p \geq 0$ for each point ``error'' within the equivalent 
kernel K-means objective \eqref{eq:kmixed}
\begin{equation} \label{eq:wKKM}
F_w(S, m) \;\;=\;\; \sum_k \sum_{p \in S^k} w_p \|\phi_p - m_k \|^2.
\end{equation}
Such weighting is common for K-means \cite{duda1973pattern2}.
Similarly to Section \ref{sec:kkm overview} we can expand the Euclidean distances in \eqref{eq:wKKM} to obtain
an equivalent {\em weighted average association} criterion generalizing \eqref{eq:AA}
\begin{equation} \label{eq:wAA}
-\sum_k \frac{\sum_{pq\in S_k} w_p w_q A_{pq}}{\sum_{p\in S_k} w_p}.
\end{equation}
Weights $w_p$  have an obvious interpretation based on \eqref{eq:wKKM}; they change the data by replicating each point $p$ 
by a number of points in the same location (Figure~\ref{fig:density equalization}a) in proportion to $w_p$. 
Therefore, this weighted formulation directly modifies the data density as
\begin{equation} \label{eq:wDensity}
\rho'_p \propto w_p \rho_p
\end{equation}
where $\rho_p$ and $\rho'_p$ are respectively the densities of the original and the new (replicated) points. 
The choice of $w_p = 1/\rho_p$ is a simple way for equalizing data density to solve Breiman's bias. 
As shown in Figure~\ref{fig:density equalization}(a), 
such a choice enables low-density points to be replicated more frequently than high-density ones. 
This is one of density equalization approaches giving the solution in Figure \ref{fig:synexp}(d).

\begin{figure}
        \centering
        \begin{tabular}{c|c}
        \includegraphics[height=0.05in]{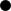} {\tiny - original data} \hspace{2ex}
         \includegraphics[height=0.05in]{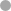} {\tiny - replicated data} & 
          \includegraphics[height=0.05in]{gini_figures/black.png} {\tiny - original data} \hspace{2ex}
         \includegraphics[height=0.05in]{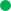} {\tiny - transformed data} \\
	  \includegraphics[height=0.8in]{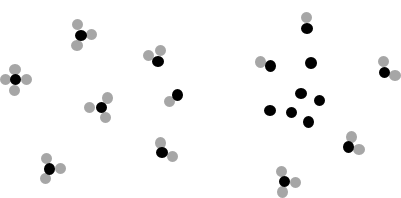} &
	  \includegraphics[height=0.8in]{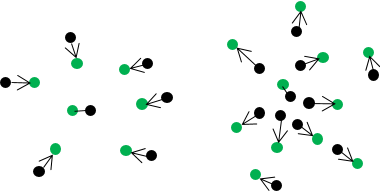} \\
         (a) adaptive weights (Sec.~\ref{sec:adaptive weighting})  & (b) adaptive kernels (Sec.~\ref{sec:density_equalization})
        \end{tabular}
        \caption{\label{fig:density equalization} {\em Density equalization} via (a) adaptive weights 
        and (b) adaptive kernels. In (a) the density is modified as in \eqref{eq:wDensity} via ``replicating'' each data point
        inverse-proportionately to the observed density using $w_p\propto 1/\rho_p$. 
        For simplicity (a) assumes positive integer weights $w_p$. 
        In (b) the density is modified according to \eqref{eq:density-from-bandwidth} for bandwidth \eqref{eq:knn bandwidth} 
        via implicit embedding of data points in a higher dimensional space that changes their relative positions. 
        }
\end{figure}

}

\section{Adaptive kernels solving \thebias/}  \label{sec:Bandwidth selection}

{ 
\thebias/ in kernel K-means is specific to \rsmall/ bandwidths. Thus, it has direct implications for the bandwidth selection problem discussed in this section.
\uline{Note that kernel bandwidth selection for \emph{clustering} should not be confused with kernel bandwidth selection for \emph{density estimation}, 
an entirely different problem outlined in Section~\ref{sec:prob inter}.}
In fact, \rsmall/ bandwidths give accurate density estimation, but yield poor clustering due to \thebias/. 
Larger bandwidths can avoid this bias in clustering. However, Section~\ref{sec:extreme} shows 
that for extremely large bandwidths kernel K-means reduces to standard K-means, 
which loses ability of non-linear cluster separation and has a different bias to equal cardinality clusters~\cite{UAI:97,volbias:ICCV15}.

In practice, avoiding extreme bandwidths is problematic since the notions of \emph{small} and \emph{large} strongly depend on data properties
that may significantly vary across the domain, {\NEW e.g. in Figure~\ref{fig:synexp}c,d where no fixed bandwidth gives a reasonable separation.} This motivates {\em locally} adaptive strategies.
Interestingly, Section~\ref{sec:Nash} shows that any locally adaptive bandwidth strategy implicitly corresponds to some data embedding
$\Omega \to \Real^{N'}$  deforming density of the points. That is, locally adaptive selection of bandwidth is equivalent to selection of density transformation.
Local kernel bandwidth and transformed density are related via the {\em density law} established in \eqref{eq:adaptive_sigma}.
As we already know from Theorem~\ref{th:gini_bias}, \thebias/ is caused by high non-uniformity of the data, which
can be addressed by density equalizing transformations. Section~\ref{sec:density_equalization} proposes 
adaptive kernel strategies based on our {\em density law} and motivated by a {\em density equalization} principle addressing \thebias/.
In fact, a popular locally adaptive kernel in \cite{zelnik2004self} is a special case of our density equalization principle.
} 

\subsection{Overview of extreme bandwidth cases} \label{sec:extreme}

{  Section \ref{sec:kKM=Gini} and Theorem~\ref{th:gini_bias} prove that for \rsmall/ bandwidths  
the kernel K-means is biased toward ``tight'' clusters, as illustrated in Figures~\ref{fig:synexp}, 
\ref{fig:bias example} and \ref{fig:nash}(d). 
As bandwidth increases, continuous kernel density~\eqref{eq:Parzen_estimate} no longer approximates the true distribution $\dSk$
violating \eqref{eq:density_approx}. Thus, Gini criterion~\eqref{eq:gini approx} is no longer 
valid as an approximation for kernel K-means objective~\eqref{eq:Parzen_energy}. In practice, \thebias/ disappears gradually 
as bandwidth gets larger. This is also consistent with experimental comparison of smaller and larger bandwidths in \cite{Shi2000}. 

The other extreme case of bandwidth for kernel K-means comes from its reduction to basic K-means for large kernels. For simplicity, 
assume Gaussian kernels \eqref{eq:gauss_k}  of large bandwidth $\sigma$ approaching 
data diameter. Then the kernel can be approximated by its Taylor expansion 
$\exp \left(-\frac{\|x-y\|^2}{2\sigma^2} \right) \approx 1-\frac{\|x-y\|^2}{2\sigma^2} $ and
kernel K-means objective~\eqref{eq:kKmeans3} for $\sigma \gg \|x-y\| $ becomes\footnote{Relation \eqref{eq:equivalence} easily follows 
by substituting $m_k\equiv \frac{1}{|S^k|}\sum_{p\in S^k} f_p$.} (up to a constant)
\begin{equation}    \label{eq:equivalence}
      \sum_k \frac{\sum_{pq \in S^k} \|f_p-f_q\|^2}{2\sigma^2 |S^k|}   \;\;\utc\;\; \frac{1}{\sigma^2} \;\; \sum_k \sum_{p \in S^k} \|f_p - m_k \|^2 , 
\end{equation}  
which is equivalent
to basic K-means~\eqref{eq:kmeans} for any fixed $\sigma$. 
}

\begin{figure}[t]

\setlength{\fboxsep}{0pt}

\begin{tikzpicture}[scale=0.96]
  {
	  \color{lightgray}
	  \draw[stealth-] (0.2in,0) -- (0.4in,0.25in);
	  \draw[stealth-] (0.7in,0) -- (1.3in,0.25in);
	  \draw[stealth-] (2.1in,0) -- (2.2in,0.25in);
	  \draw[stealth-] (3in,0) -- (3.1in,0.25in);
	  \node (A) at (0.4in,0.5in) {\fbox{\includegraphics[height=0.6in,trim=2cm 2cm 2cm 6mm,clip=true]{gini_figures/gaussian_uneven.pdf}}};
	  \node at (1.3in,0.5in) {\fbox{\includegraphics[height=0.6in,trim=2cm 2cm 2cm 6mm,clip=true]{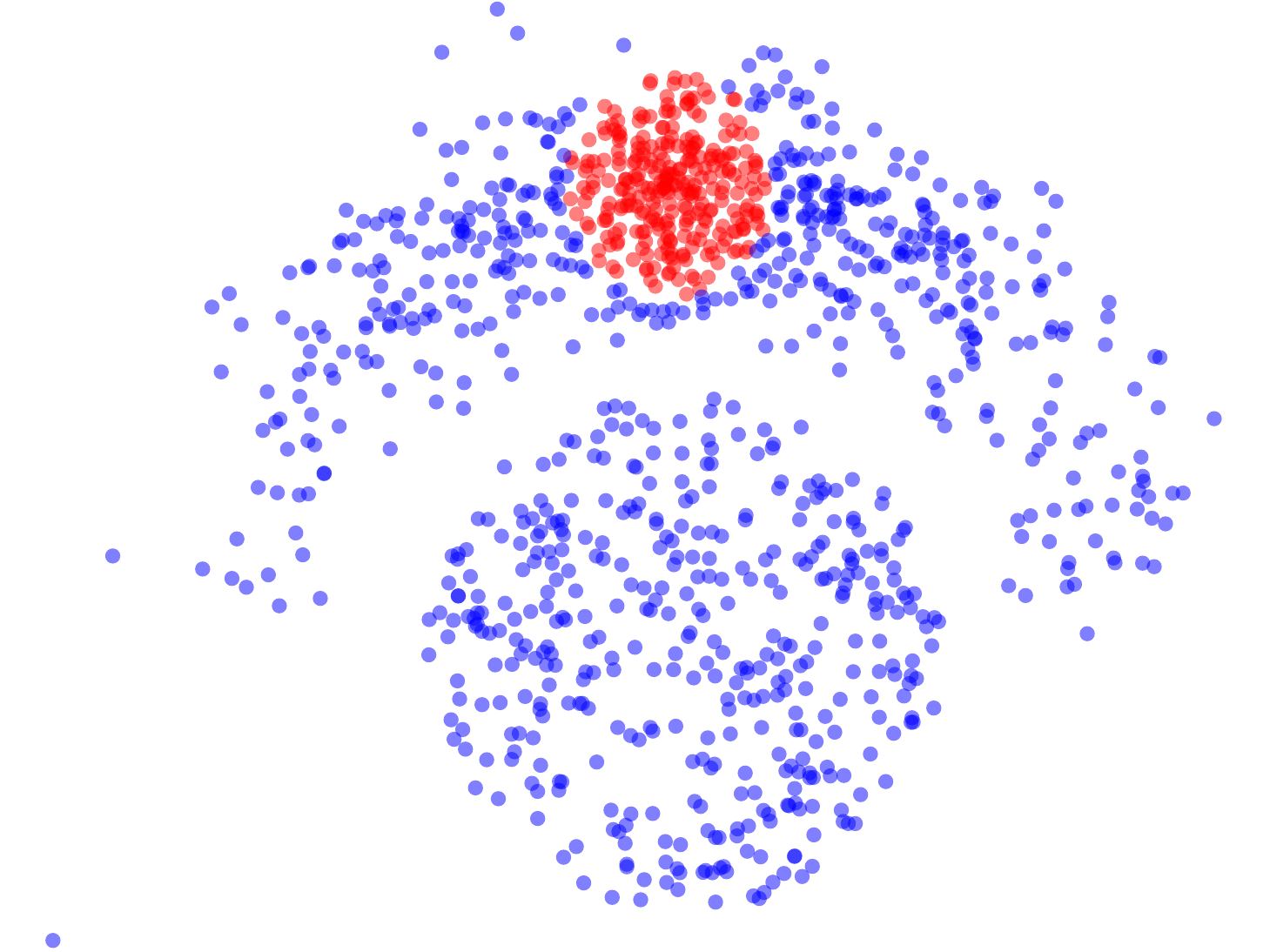}}};
	  \node at (2.2in,0.5in) {\fbox{\includegraphics[height=0.6in,trim=2cm 2cm 2cm 6mm,clip=true]{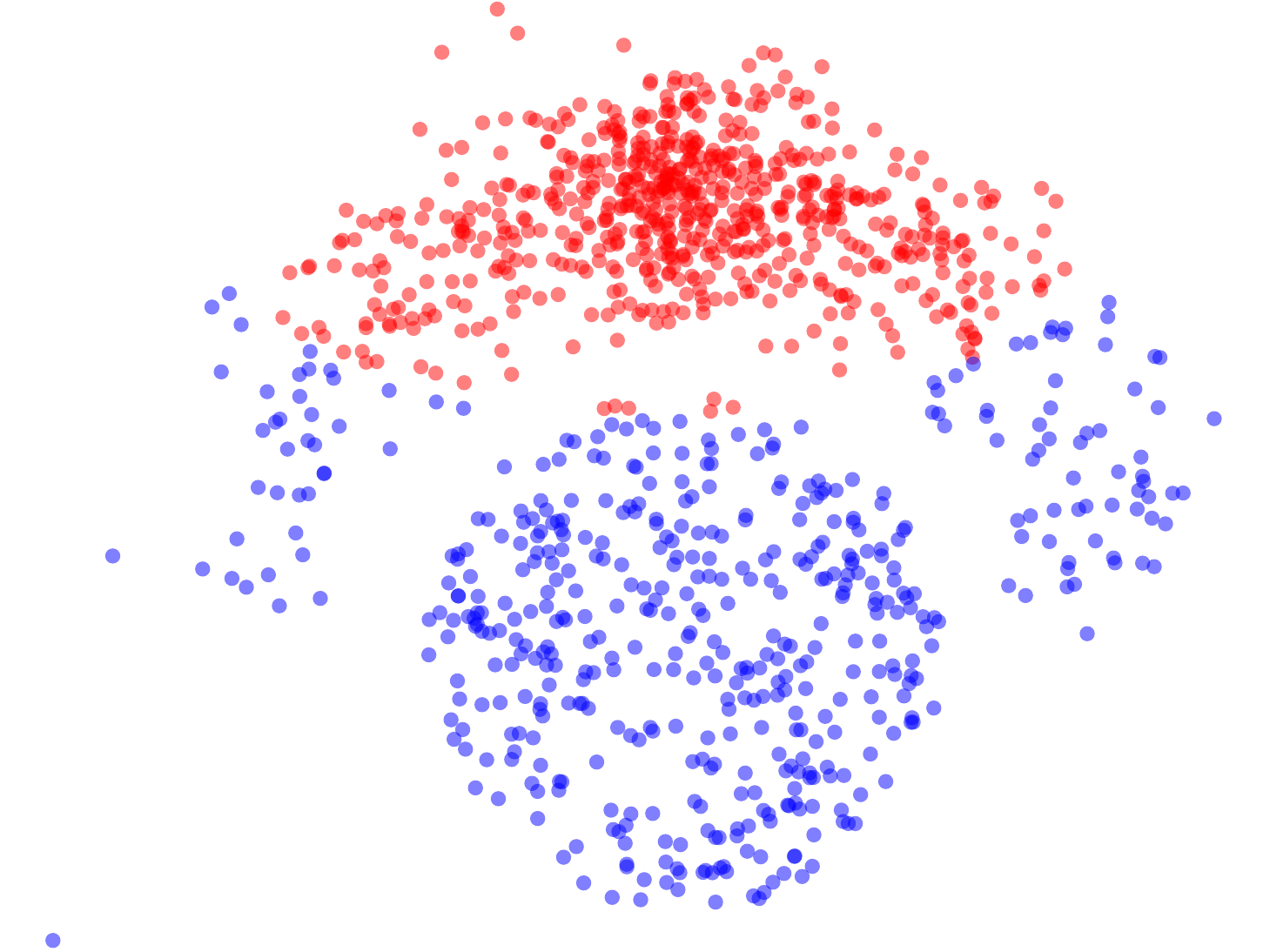}}};
	  \node at (3.1in,0.5in) {\fbox{\includegraphics[height=0.6in,trim=2cm 2cm 2cm 6mm,clip=true]{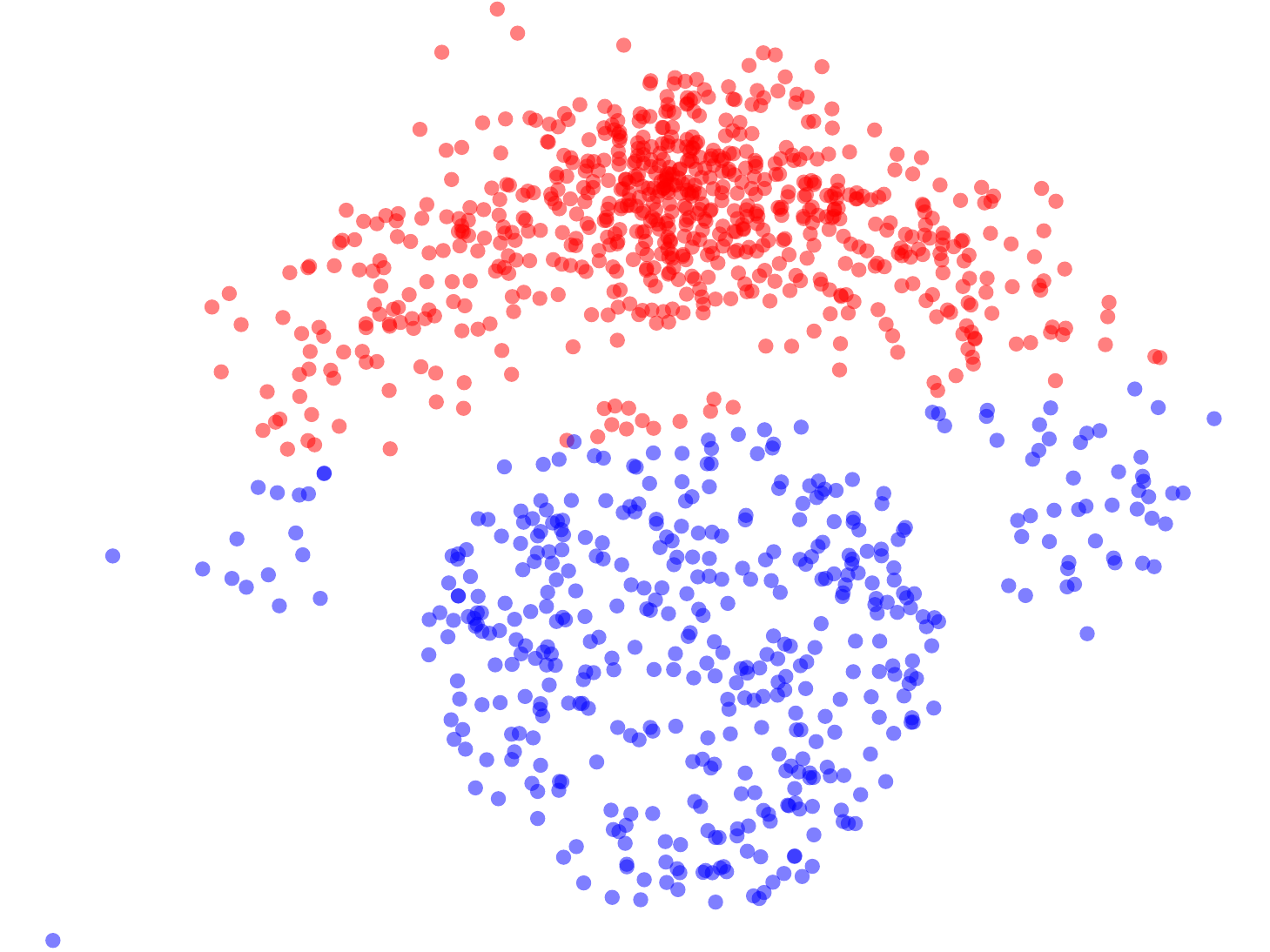}}}; 
  }
  \node at (0.2ex,-2.2ex) {$0$};
  \node at (\columnwidth,-2ex)  {$\infty$};
  \node at (\columnwidth-0.5ex,1in) [left]   { 
  		\parbox{25ex}{ \centering\bf ``equi-cardinality'' bias \\{\footnotesize (lack of non-linear separation)}}};
  		
  \node at (8ex,-2ex) {\rsmall/ $\sigma$};
  \node at (0.85in,1in)  {
  		\parbox{16ex}{ \centering\bf \thebias/ \\ {\footnotesize(mode isolation)}}};
  		
  \draw[|->,line width=1.1pt] (0,0) -- (\columnwidth,0) ;  
  \draw[line width=1.1pt] (0.7*\columnwidth,-0.5ex) -- (0.7*\columnwidth,0.5ex) ;  
  \node at (0.7*\columnwidth,-2ex) []   {$d_\Omega$};
  \foreach \x in {0.03, 0.045, ..., 0.28}{
  	\draw [opacity=(1 - ((\x-0.15)*7)^2)] (\x*\columnwidth, 0) -- ( \x*\columnwidth + 0.015*\columnwidth, 0.015*\columnwidth);
  }
  \foreach \x in {0.55, 0.565, ..., 0.98}{
  	\draw [opacity=((\x-0.55)*5)] (\x*\columnwidth, 0) -- ( \x*\columnwidth + 0.015*\columnwidth, 0.015*\columnwidth);
  }
\end{tikzpicture}
\caption{   Kernel K-means biases over the range of bandwidth $\sigma$. Data diameter is denoted by
$d_\Omega=\max_{pq\in\Omega}\|f_p-f_q\|$. \thebias/ is established for \rsmall/ $\sigma$ (Section \ref{sec:prob inter}).
Points stop interacting for $\sigma$ smaller than \rsmall/ making kernel K-means fail.
Larger $\sigma$ reduce kernel K-means to the basic K-means removing an ability to separate the clusters non-linearly.
In practice, there could be no intermediate good $\sigma$.
In the example of Fig.\ref{fig:synexp}(c) any fixed $\sigma$ leads to either \thebias/ or to the lack of
non-linear separability.
}
\label{fig:bias in kkm}
\end{figure}

{ 
Figure~\ref{fig:bias in kkm} summarizes kernel K-means biases for different bandwidths. 
For large bandwidths the kernel K-means loses its ability to find non-linear cluster separation due to reduction to the basic K-means.
Moreover,  it inherits the bias to equal cardinality clusters, which is well-known for the basic K-means  \cite{UAI:97,volbias:ICCV15}.
On the other hand, for small bandwidths kernel K-means has \thebias/ proven in Section \ref{sec:analysis}.
To avoid the biases in Figure~\ref{fig:bias in kkm}, kernel K-means should use a bandwidth neither too small nor too large. 
This motivates locally adaptive bandwidths.
}

\subsection{Adaptive kernels as density transformation} \label{sec:Nash}

{ 
This section shows that kernel clustering~\eqref{eq:kKmeans3} with any \emph{locally adaptive bandwidth} strategy
satisfying some reasonable assumptions is equivalent to \emph{fixed bandwidth} kernel clustering in a new feature space
(Theorem \ref{th:equiv_clustering}) with a deformed point density.  
The adaptive bandwidths relate to density transformations via {\em density law} \eqref{eq:adaptive_sigma}.
To derive it, we interpret {\em adaptiveness} as non-uniform variation of distances across the feature space. In particular, 
we use a general concept of {\em geodesic kernel} defining adaptiveness via a metric tensor 
and illustrate it by simple practical examples.

Our analysis of \thebias/  in Section~\ref{sec:analysis} applies to general kernels \eqref{eq:density kernel}
suitable for density estimation. Here we focus on clustering with kernels based on {\em radial basis functions} $\rbf$ s.t. 
\begin{equation} \label{eq:rbf}
\rbf(x-y)=\rbf(\|x-y\|).
\end{equation}
To obtain adaptive kernels, we replace Euclidean metric with Riemannian inside \eqref{eq:rbf}. 
In particular, $\|x-y\|$ is replaced with {\em geodesic distances} $d_g(x,y)$ between features $x,y \in {\cal R}^N$
based on any given metric tensor $g(f)$ for $f\in{\cal R}^N$. This allows to define a {\em geodesic} or 
{\em Riemannian} kernel at any points $f_p$ and $f_q$ as in \cite{Hartley:pami2015kernels}
\begin{equation} \label{eq:geo_kernel}
k_g(f_p,f_q)\;\;:=\;\;  \rbf  (d_g(f_p,f_q))  \;\; \equiv \;\;  \rbf (d_{pq})
\end{equation}
where  $d_{pq}:=d_g(f_p,f_q)$  is introduced for shortness.

In practice, the metric tensor can be defined only at the data points $g_p:= g(f_p)$ for $p\in\Omega$.
Often, quickly decaying radial basis functions $\psi$ allow Mahalanobis distance approximation inside \eqref{eq:geo_kernel}
\begin{equation} \label{eq:geodesic_dist}
d_g(f_p,x)^2  \;\; \approx \;\;   (f_p-x)^T g_p \, (f_p-x),
\end{equation} 
which is normally valid only in a small neighborhood of $f_p$. If necessary, one can 
use more accurate approximations for $d_g(f_p,f_q)$ based on Dijkstra \cite{cormen:2006} or Fast Marching method \cite{sethian1999level}.

\begin{example}[\textbf{\textit{Adaptive non-normalized\footnote{\NEW Lack of normalization as in \eqref{eq:non normalized adaptive}
is critical for {\em density equalization} resolving \thebias/, which is our only goal for adaptive kernels. Note that
without kernel normalization as in \eqref{eq:density kernel} Parzen density formulation of kernel k-means \eqref{eq:Parzen_energy} 
no longer holds invalidating the relation to Gini and \thebias/ in Section \ref{sec:analysis}. On the contrary, 
{\em normalized} variable kernels are appropriate for {\em density estimation}~\cite{TarrelScott92} 
validating \eqref{eq:Parzen_energy}. They can also make approximation \eqref{eq:density_approx} more accurate 
strengthening connections to Gini and \thebias/.}
Gaussian kernel}}]  \label{ex:non normalized adaptive}
Mahalanobis distances based on (adaptive) bandwidth matrices $\Sigma_p$ 
defined at each point $p$ can be used to define adaptive  kernel
\begin{equation}    \label{eq:non normalized adaptive}
\kappa_{p}(f_p,f_q) \; := \; \exp\frac{-(f_p-f_q)^T\Sigma^{-1}_{p}(f_p-f_q)}{2},
\end{equation}
which equals fixed bandwidth Gaussian kernel \eqref{eq:gauss_k} for $\Sigma_p = \sigma^2 I$.
Kernel \eqref{eq:non normalized adaptive} approximates \eqref{eq:geo_kernel}
for exponential function $\psi$ in \eqref{eq:normal kernel} and tensor $g$ continuously extending matrices 
$\Sigma^{-1}_p$ over the whole feature space so that $g_p = \Sigma^{-1}_p$ for $p\in\Omega$.
Indeed, assuming matrices $\Sigma^{-1}_p$ and tensor~$g$ change slowly between points 
within bandwidth neighbourhoods, one can use \eqref{eq:geodesic_dist} for all points in
\begin{equation} \label{eq:non normalized adaptive_approx}
\kappa_p(f_p,f_q)\;\;\approx\;\; \exp  \frac{-d_g(f_p,f_q)^2}{2}  \;\; \equiv \;\;\exp \frac{-d^2_{pq}}{2} 
\end{equation}
due to exponential decay outside the bandwidth neighbourhoods.
\end{example}

\begin{example}[\textbf{\textit{Zelnik-Manor \& Perona kernel \cite{zelnik2004self}}}]  \label{ex:Perona}
This popular  kernel is defined as $\kappa_{pq}:=\exp\frac{-\|f_p-f_q\|^2}{2\sigma_p \sigma_q}$. 
This kernel's relation to \eqref{eq:geo_kernel} is less intuitive due to the lack of ``local'' Riemannian tensor. 
However, under assumptions similar to those in \eqref{eq:non normalized adaptive_approx}, it can still be seen
as an approximation of geodesic kernel  \eqref{eq:geo_kernel} for some tensor $g$ such that $g_p=\sigma_p^{-2}I$ for $p\in\Omega$.
They use heuristic $\sigma_p=R_p^K$, which is the distance to the K-th nearest neighbour of~$f_p$.
 \end{example}

\begin{example}[\textbf{\textit{KNN kernel}}] \label{ex:KNN kernel}
This adaptive kernel is defined as $u_p(f_p,f_q)  = [f_q \in \KNN/(f_p)]$ 
where $\KNN/(f_p)$ is the set of $K$ nearest neighbors of $f_p$. This kernel approximates \eqref{eq:geo_kernel} for 
uniform function $\psi(t)=[t<1]$ and tensor $g$ such that $g_p=I/(R^K_p)^{2}$.
\end{example}

\begin{figure}[t]
\centering
\begin{tabular}{ cc }
{\small (a) space of points $f$}  & {\small (b) transformed points  $f'$}   \\
{\small with Riemannian metric $g$}  & {\small with Euclidean metric} \\
\begin{tikzpicture}[scale=1.2,color=black]
  \draw (-2ex,-2ex) rectangle (21ex, 10ex); 

  \begin{scope}[shift={(1ex,0)}]
  \draw[rotate=-30,color=blue!50,fill=black!10,thick,scale=0.7] (0,5ex) ellipse (4ex and 5ex) ;
  \draw[color=black,fill] (1.8ex,2.9ex) circle (0.5pt);
  \node at (3ex,4ex) {\footnotesize $g_1$};
  \end{scope}  
  
  \begin{scope}[shift={(4.5ex,0)}]
   \draw[rotate=20,color=blue!50,fill=black!10,thick] (14ex,0) ellipse (2ex and 4ex);
   \draw[color=black,fill] (13.3ex,4.7ex) circle (0.5pt);
   \node[above] at (13.3ex,4.7ex) {\footnotesize $g_2$};  
  \end{scope}  
  
  \begin{scope}[shift={(2.5ex,0)}]
   \draw[color=blue!50,fill=black!10,thick] (7ex,0ex) circle (1ex);
   \draw[color=black,fill] (7ex,0ex) circle (0.5pt);
   \node at (9.5ex,0.5ex) {\footnotesize $g_3$};  
  \end{scope}  
\end{tikzpicture} &
\begin{tikzpicture}[scale=1.2,color=black]
  \draw (-2ex,-2ex) rectangle (21ex, 10ex); 

  \node (A) at (2ex,5ex) {};
  \draw[color=blue!50,fill=black!10,thick] (A) circle (2ex) ;
  \draw[color=black,fill] (A) circle (0.5pt);
  \draw[->] (2ex,5ex) -- +(0.7ex*2,-0.7ex*2);
  \node[right] at (A) {\footnotesize $1$};
  
  \begin{scope}[shift={(4.5ex,0)}]
   \draw[color=blue!50,fill=black!10,thick] (13ex,7ex) circle (2ex) ;
   \draw[color=black,fill] (13ex,7ex) circle (0.5pt);
  \draw[->] (13ex,7ex) -- +(0.7ex*2,-0.7ex*2);
   \node[right] at (13ex,7ex) {\footnotesize $1$};
  \end{scope}  
  
  \begin{scope}[shift={(2.5ex,1ex)}]
   \draw[color=blue!50,fill=black!10,thick] (8ex,1ex) circle (2ex) ;
   \draw[color=black,fill] (8ex,1ex) circle (0.5pt);
  \draw[->] (8ex,1ex) -- +(0.7ex*2,-0.7ex*2);
   \node[right] at (8ex,1ex) {\footnotesize $1$};  
  \end{scope}  
\end{tikzpicture}
\\[-1ex]
{\footnotesize unit balls in Riemannian metric} &
{\footnotesize unit balls in Euclidean metric} 
\end{tabular}
\centering
 \caption{Adaptive kernel \eqref{eq:geo_kernel} based on Riemannian distances (a) is equivalent to
fixed bandwidth kernel after some {\em quasi-isometric} \eqref{eq:d tilde} embedding into Euclidean space (b), 
see Theorem~\ref{th:equiv_clustering}, mapping ellipsoids \eqref{eq:ellipsoid} to balls \eqref{eq:ball} and 
modifying data density as in \eqref{eq:density_transform}.\label{fig:ne}} 
\end{figure}
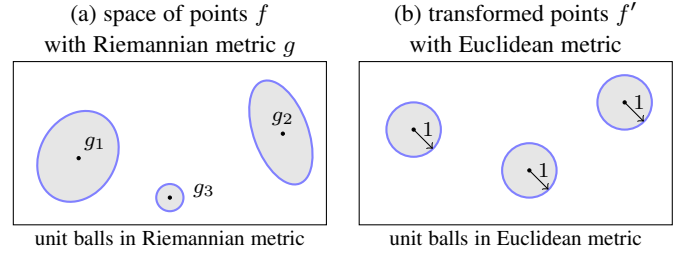

\begin{theor} \label{th:equiv_clustering}
Clustering \eqref{eq:kKmeans3} with (adaptive) geodesic kernel \eqref{eq:geo_kernel} is equivalent to
clustering with fixed bandwidth kernel $k'(f'_p,f'_q):=\psi'(\|f'_p-f'_q\|)$ in new feature space $\Real^{N'}$ for 
some radial basis function $\psi'$ using the Euclidean distance and some constant $N'$.
\end{theor}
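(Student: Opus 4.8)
The plan is to realize the ``new feature space'' as an isometric embedding of the Riemannian manifold $(\Real^N,g)$ into a Euclidean space, and then argue that, at the scale that matters for a quickly-decaying $\psi$, the geodesic kernel \eqref{eq:geo_kernel} evaluated on the original points agrees with an ordinary radial kernel evaluated on the embedded points. First I would invoke the Nash embedding theorem: given a sufficiently smooth metric tensor $g$ extending the prescribed data-point matrices $g_p$, there is a smooth embedding $\Phi\colon\Real^N\to\Real^{N'}$, with $N'$ depending only on $N$, whose pull-back of the Euclidean metric is $g$, i.e. $D\Phi(f)^{\!\top}D\Phi(f)=g(f)$ for every $f$. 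Define the transformed points $f'_p:=\Phi(f_p)$. Since $\Phi$ is a Riemannian isometry onto its image, it preserves curve lengths and hence intrinsic geodesic distances exactly.

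The key step is passing from the preserved \emph{intrinsic} distance to the \emph{chordal} Euclidean distance $\|f'_p-f'_q\|$ that actually enters a fixed-bandwidth radial kernel. Here I would reuse exactly the locality the paper already exploits in \eqref{eq:geodesic_dist}: because $\psi$ decays quickly and $g$ varies slowly over a bandwidth neighbourhood, only pairs $p,q$ with small $d_g(f_p,f_q)$ contribute to \eqref{eq:kKmeans3}, and for those a first-order Taylor expansion of $\Phi$ gives $\|f'_p-f'_q\|^2=(f_p-f_q)^{\!\top}D\Phi_p^{\!\top}D\Phi_p\,(f_p-f_q)+o(\|f_p-f_q\|^2)=(f_p-f_q)^{\!\top}g_p\,(f_p-f_q)+o(\|f_p-f_q\|^2)$, which is precisely the Mahalanobis approximation of $d_g(f_p,f_q)^2$. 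Thus $\|f'_p-f'_q\|\approx d_g(f_p,f_q)$ in the bandwidth range (this is the ``quasi-isometry'' of the embedding), while for far pairs both $\psi(d_g(f_p,f_q))$ and $\psi(\|f'_p-f'_q\|)$ are negligible. Taking $\psi':=\psi$ (absorbing any leftover scale factor) therefore yields $k_g(f_p,f_q)\approx k'(f'_p,f'_q):=\psi'(\|f'_p-f'_q\|)$ for all $p,q\in\Omega$.

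Finally I would substitute this identity into the pairwise objective \eqref{eq:kKmeans3}: the partition variable $S$ and the cardinalities $|S^k|$ are untouched by the relabelling $f_p\mapsto f'_p$, so the adaptive-geodesic-kernel energy over $\Omega$ equals, up to the controlled approximation and up to the usual additive / positive-multiplicative constants that do not affect the minimizer, the fixed-bandwidth radial-kernel energy over $\Omega'=\{f'_p\}$ in $\Real^{N'}$; hence the two clustering problems are equivalent. As a by-product the Jacobian $|\det D\Phi_p|=\sqrt{\det g_p}$ records the local volume change produced by $\Phi$, which yields the density-transformation law tying the adaptive bandwidth to the deformed point density used in the rest of the section.

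The main obstacle is exactly the intrinsic-versus-chordal distance gap: the Nash embedding preserves the metric tensor and geodesic distances \emph{along the submanifold} $\Phi(\Real^N)$, but not the straight-line distances in the ambient $\Real^{N'}$, so the equivalence is exact only in the quickly-decaying-kernel / slowly-varying-metric regime already assumed throughout Section~\ref{sec:analysis}; a secondary technicality is exhibiting a globally smooth tensor $g$ interpolating the prescribed values $g_p$ at the data points and bounding the embedding dimension $N'$.
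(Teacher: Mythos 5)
Your route is genuinely different from the paper's, and it does not prove the theorem as stated. The paper's proof is an exact, purely finite-dimensional argument: it invokes the classical result \cite{lingoes1971some,gower1986metric,buhmann:pami03} that for \emph{any} symmetric zero-diagonal matrix $(d_{pq})$ there is a constant $h$ such that $\widetilde d_{pq}^2 = d_{pq}^2 + h^2[p\neq q]$ is a Euclidean distance matrix, takes $f'_p$ to be the corresponding embedding of the finite set $\Omega$, and then \emph{changes the radial function} to $\psi'(t):=\psi\bigl(\sqrt{t^2-h^2[t\geq h]}\bigr)$ so that $k_g(f_p,f_q)=k'(f'_p,f'_q)$ holds exactly for every pair, with no assumptions on $\psi$, on the smoothness of $g$, or on locality. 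The equivalence of the objectives \eqref{eq:kKmeans3} is then an identity, not an approximation.

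The gap in your argument is the one you yourself flag: a Nash (Riemannian-isometric) embedding preserves \emph{intrinsic} geodesic distances along the image submanifold, whereas the fixed-bandwidth kernel $k'(f'_p,f'_q)=\psi'(\|f'_p-f'_q\|)$ is evaluated on \emph{chordal} ambient distances, and these agree only to leading order for nearby pairs. To close that gap you must assume that $\psi$ decays fast enough and $g$ varies slowly enough that distant pairs contribute negligibly --- assumptions that appear nowhere in the theorem, which is stated for an arbitrary radial basis function and an arbitrary metric tensor. Keeping $\psi'=\psi$ is also not available in general: the whole point of the paper's construction is that the embedding distorts distances by the additive constant $h^2$, and $\psi'$ must absorb that distortion. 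So your argument yields at best an approximate equivalence in a restricted regime, while the claimed (and proved) statement is an exact equivalence for the finite dataset; the price the paper pays, which your approach avoids, is that $N'$ and $h$ depend on the data (with $N'$ possibly as large as $|\Omega|-1$) rather than only on $N$. Your observation that $|\det D\Phi_p|^{1/2}$ encodes the volume change is consistent with the density law \eqref{eq:density_transform}, but in the paper that law is derived after the theorem from the ball-counting argument \eqref{eq:density derivation}, not from a Jacobian of a global smooth embedding.
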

\begin{proof}
A powerful general result in~\cite{lingoes1971some,gower1986metric,buhmann:pami03} states that for any symmetric 
matrix $(d_{pq})$ with zeros on the diagonal there is a constant $h$ such that squared distances 
\begin{equation} \label{eq:d tilde}
\widetilde d_{pq}^2\;\;=\;\;d^2_{pq} + h^2[p \ne q] 
\end{equation} 
form \emph{Euclidean matrix} $(\widetilde d_{pq})$. That is, there exists some Euclidean embedding 
$\Omega \to {\cal R}^{N'}$ where for  $\forall p\in \Omega$ there corresponds a point $f'_p \in {\cal R}^{N'}$ such that \mbox{$\|f'_p-f'_q\|=\widetilde d_{pq}$}, see Figure~\ref{fig:ne}.
Therefore,
\begin{equation} \label{eq:new_kernel} 
 \psi(d_{pq}) \;\;=\;\;  \psi \left(\sqrt{\widetilde{d}_{pq}^2 - h^2 \, [d_{pq} \geq h]} \right)  \;\; \equiv \;\;   \psi'(\widetilde{d}_{pq})
\end{equation} for $\psi'(t)\!:=\!\psi(\sqrt{t^2-h^2[t \geq h]})$ and $k_g(f_p,f_q)\!=\!k'(f'_p,f'_q)$. 
\end{proof}

Theorem \ref{th:equiv_clustering} proves that {\em adaptive} kernels for $\{f_p\}\subset{\cal R}^{N}$
can be equivalently replaced by a {\em fixed} bandwidth kernel for some implicit 
embedding\footnote{The implicit embedding implied by Euclidean matrix \eqref{eq:d tilde} 
should not be confused with embedding in the Mercer's theorem for kernel methods.} 
$\{f'_p\} \subset {\cal R}^{N'}$ in a new space. 
Below we establish a relation between three local properties at point $p\,$: 
adaptive bandwidth represented by matrix $g_p$ and two densities $\rho_p$ and $\rho'_p$ in the original and the new feature spaces.
For $\varepsilon>0$ consider an ellipsoid in the original space $\mathcal R^N$, see Figure~\ref{fig:ne}(a),
\begin{equation} \label{eq:ellipsoid}
B_p \;\;:=\;\; \{ x \; | \; (x-f_p)^T g_p\, (x-f_p) \le \varepsilon^2\}.
\end{equation} 
Assuming $\varepsilon$ is small enough so that approximation~\eqref{eq:geodesic_dist} holds,
ellipsoid \eqref{eq:ellipsoid} covers features $\{f_q\,|\,q\in\Omega_p\}$ for subset of points
\begin{equation}
\Omega_p \;:=\;\; \{q\in\Omega \;|\; d_{pq} \leq\varepsilon \}.
\end{equation}
Similarly, consider a ball in the new space $\mathcal R^{N'}$, see Figure~\ref{fig:ne}(b),
\begin{equation} \label{eq:ball}
B'_p \;\;:=\;\; \{ x \; | \; \|x-f'_p\|^2 \le \varepsilon^2 + h^2\}
\end{equation} 
covering features $\{f'_q\,|\,q\in\Omega'_p\}$ for points
\begin{equation}
\Omega'_p \;\;:=\;\; \{q\in\Omega \;|\; \widetilde{d}^2_{pq} \leq\varepsilon^2 + h^2 \}.
\end{equation}

It is easy to see that \eqref{eq:d tilde} implies $\Omega_p=\Omega'_p$.
Let $\rho_p$ and $\rho'_p$ be the densities\footnote{We use the physical rather than probability density. 
They differ by a factor.} of points within $B_p$ and $B'_p$ correspondingly. Assuming
$|\cdot|$ denotes volumes or cardinalities of sets, we have
\begin{equation} \label{eq:density derivation}
\rho_p\cdot |B_p| \;\;=\;\; |\Omega_p| \;\;=\;\; |\Omega'_p| \;\;=\;\;\rho'_p\cdot |B'_p|.
\end{equation}   }
Omitting a constant factor depending on $\varepsilon$, $h$, $N$ and $N'$ we get
\begin{equation} \label{eq:density_transform}
\rho'_p \;\;=\;\; \rho_p\; \frac{|B_p|}{|B'_p|}  \;\;\propto\;\; \rho_p \; |\!\det g_p|^{-\frac{1}{2}}
\end{equation}   
representing  the general form of the {\em density law}. 
For the basic isotropic metric tensor such that $g_p = I/\sigma_p^2$ it simplifies to
\begin{equation} \label{eq:density-from-bandwidth}
\rho'_p \;\;\propto\;\; \rho_p\, \sigma_p^N.
\end{equation} 
Thus, bandwidth $\sigma_p$ can be selected adaptively based on any desired transformation of 
density $\rho'_p \equiv \tau(\rho_p)$  using
\begin{equation} \label{eq:adaptive_sigma}
\sigma_{p} \;\; \propto \;\; \sqrt[N]{\tau(\rho_p)/\rho_p}.
\end{equation}
where observed density $\rho_p$ in the original feature space can be evaluated at any point $p$ 
using any standard estimators, \eg \eqref{eq:Parzen_estimate}.

\subsection{Density equalizing locally adaptive kernels} \label{sec:density_equalization}

\begingroup
\setlength{\columnsep}{1ex}%
\setlength{\intextsep}{-1ex}%
Bandwidth formula \eqref{eq:adaptive_sigma} works for any density transform~$\tau$. 
To address Breiman's bias, one can use density equalizing transforms $\tau(\rho)=\const$ or 
$\tau(\rho)=\frac{1}{\alpha}\log(1 + \alpha \rho)$, which even up 
\begin{wrapfigure}{r}{40mm}
\vspace{-5ex}
\begin{tikzpicture}[scale=0.8]
  \draw[->] (-0.1,0) -- (3.5,0);
  \node at (3.5,-.3) [left] {\small original density $\rho$};
  \draw[->] (0,-0.1) -- (0,2);
  \node at (-0.4,1) [text width=3cm, align=center,rotate=90] {\small new density};
  \draw[thick,domain=0:2,smooth,variable=\x,gray] plot ({\x},{abs(\x)});
  \draw[thick,domain=0:3.5,smooth,variable=\x,red] plot ({\x},{ln(1+\x)});
  \draw[thick,domain=0:3.5,smooth,variable=\x,blue] plot ({\x},{1});
  \node[right,gray] at (2,1.9) {\small $\tau(\rho)= \rho$};
  \node[right,red] at (0.5,0.3) {\small $\tau(\rho)=\frac1\alpha\log(1+\alpha \rho)$};
  \node[right,blue] at (1.5,0.8) {\small $\tau(\rho)=\const$};
\end{tikzpicture}
\end{wrapfigure}
the highly dense parts of the feature space as 
illustrated on the right. 
Some empirical results using density equalization $\tau(\rho)=\const$ for synthetic and real data are shown in Figures \ref{fig:synexp}(d)  and \ref{fig:nash}(e,f).

\endgroup

\begin{figure}
\centering
\footnotesize
\begin{tabular}{ cc|c } 
\multicolumn{2}{c|}{using \textbf{\textit{fixed}} width kernel}& using \textbf{\textit{adaptive}} kernel\\

\includegraphics[width=0.23\linewidth]{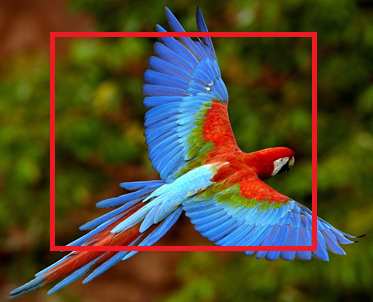} & \includegraphics[width=0.23\linewidth]{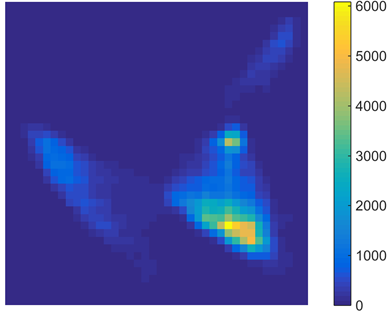} & \includegraphics[width=0.23\linewidth]{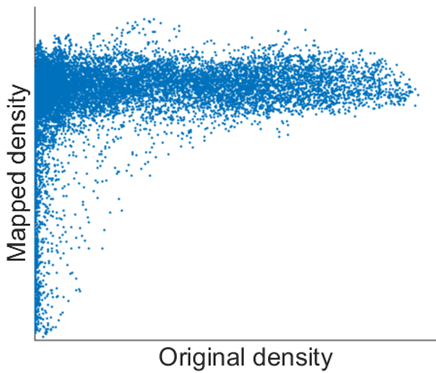} \\ 
(a) input image & (b) 2D color histogram & (e) density mapping \\ 
\includegraphics[width=0.23\linewidth]{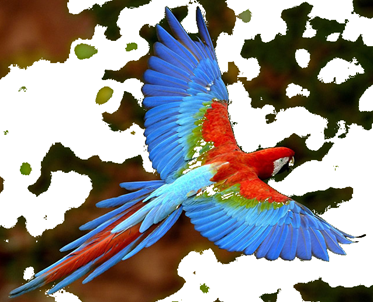} & \includegraphics[width=0.2\linewidth]{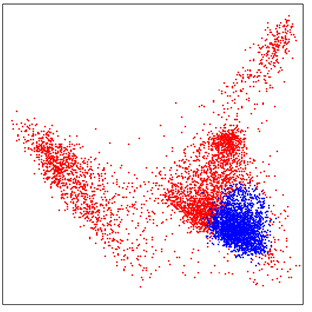} & \includegraphics[width=0.23\linewidth]{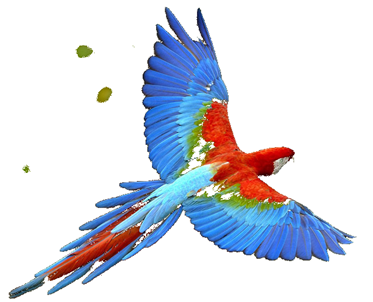} \\ 
(c) clustering result& (d) color coded result & (f) clustering result
\end{tabular}
\caption{(a)-(d): {\em \thebias/} for fixed bandwidth kernel \eqref{eq:gauss_k}. 
(f):~result for \eqref{eq:non normalized adaptive} with adaptive bandwidth \eqref{eq:knn bandwidth} s.t. $\tau(\rho)\!=\const$.
(e) {\em density equalization}: scatter plot of empirical densities in the original/new feature spaces
obtained via \eqref{eq:Parzen_estimate} and \eqref{eq:d tilde}.}
\label{fig:nash}
\end{figure}

One way to estimate the density in \eqref{eq:adaptive_sigma} is \KNN/ approach~\cite{Bishop06}
\begin{equation}
\rho_p \;\; \approx \;\; \frac{K}{n V_K} \;\; \propto \;\; \frac{K}{n (R_p^K)^N}
\end{equation} 
where $n\equiv|\Omega|$ is the size of the dataset, $R_p^K$ is the distance to the $K$-th nearest neighbor of $f_p$, 
$V_K$ is the volume of a ball of radius $R_p^K$ centered at $f_p$. 
Then, density law \eqref{eq:adaptive_sigma} for $\tau(\rho)=\const$ gives
\begin{equation} \label{eq:knn bandwidth}
\sigma_p \;\; \propto \;\; R_p^K 
\end{equation}
consistent with heuristic bandwidth in \cite{zelnik2004self}, see Example~\ref{ex:Perona}.

\begingroup
\setlength{\columnsep}{1ex}%
\setlength{\intextsep}{0ex}%

The result in Figure~\ref{fig:synexp}(d) uses adaptive Gaussian kernel \eqref{eq:non normalized adaptive}
for $\Sigma_p=\sigma_p I$ with $\sigma_p$ derived in~\eqref{eq:knn bandwidth}. 
Theorem~\ref{th:equiv_clustering} claims equivalence 
to a fixed bandwidth kernel in some transformed higher-dimensional space $\Real^{N'}\!$.
Bandwidths \eqref{eq:knn bandwidth} are chosen specifically 
to equalize the data density
in this space so that $\tau(\rho)=\const$. 
\begin{wrapfigure}{r}{40mm}
\centering 
{\includegraphics[width=40mm,trim=15mm 30mm 15mm 25mm,clip]{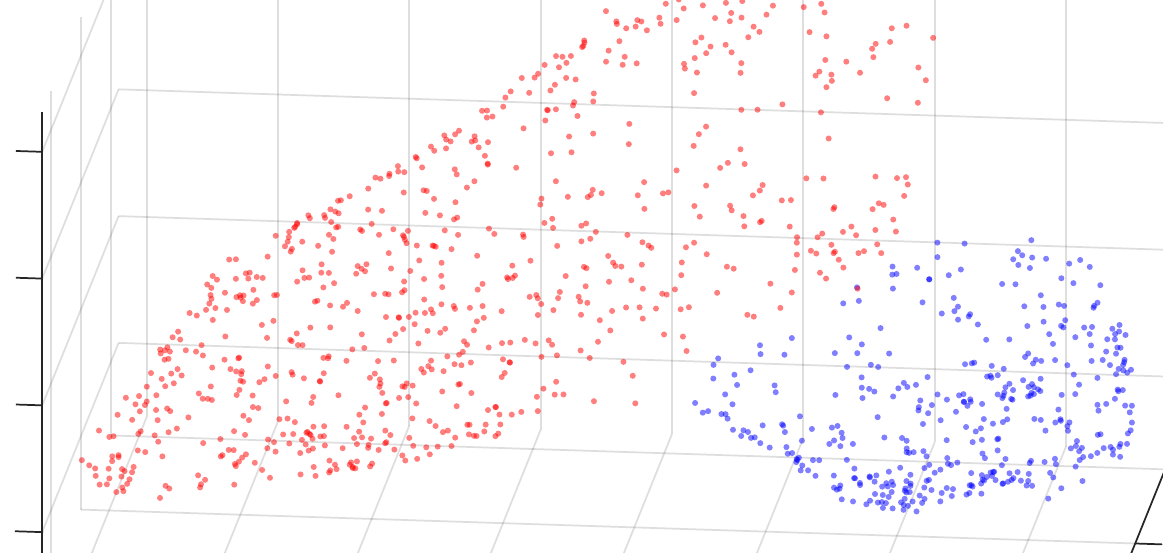}}
\end{wrapfigure}
The picture on the right illustrates such density equalization for the data in Figure~\ref{fig:synexp}(d).
It shows a 3D projection of the transformed data obtained by \emph{multi-dimensional scaling} \cite{cox2000mds} 
for matrix $(\widetilde{d}_{pq})$ in \eqref{eq:d tilde}.
The observed density equalization removes \thebias/ from the clustering in Figure~\ref{fig:synexp}(d).

\endgroup

{ 
Real data experiments for kernels with adaptive bandwidth  \eqref{eq:knn bandwidth}
are reported in Figures~\ref{fig:bias example}, \ref{fig:labelme}, \ref{fig:nash}, \ref{fig:grabcutexamples} and Table~\ref{tb:errorrates}. 
Figure \ref{fig:nash}(e) illustrates the empirical {\em density equalization} effect for this bandwidth. 
Such data homogenization removes the conditions leading to \thebias/, see Theorem~\ref{th:gini_bias}.
Also, we observe empirically that \KNN/ kernel is competitive with adaptive Gaussian kernels, 
but its sparsity gives efficiency and simplicity of implementation.}

\begin{figure}
        \centering
        \includegraphics[width=0.85\columnwidth,trim=1cm 0.15cm 9cm 0.15cm,clip]{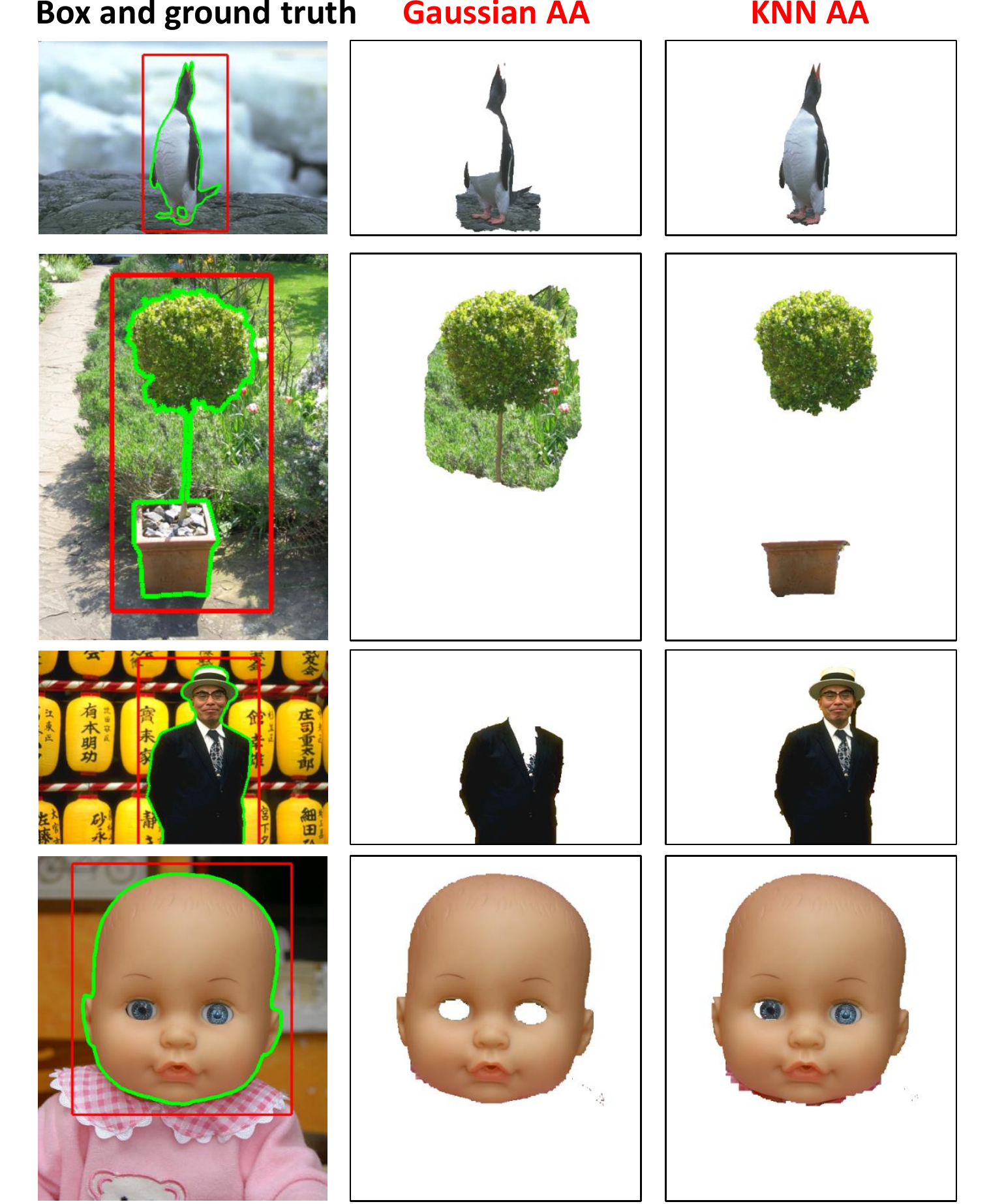}
         \caption{Representative interactive segmentation results. 
         Regularized average association (\AA/) with fixed bandwidth kernel \eqref{eq:gauss_k} 
         or adaptive \KNN/ kernels (Example \ref{ex:KNN kernel}) is optimized as in \cite{kkm:iccv15}. 
         Red boxes define initial clustering, green contours define ground-truth clustering. 
         Table \ref{tb:errorrates} provides the error statistics. \thebias/ manifests itself by isolating the 
         most frequent color from the rest. \label{fig:grabcutexamples}}
\end{figure}

\begin{table}
{\footnotesize
\hfill{}
 \begin{tabular}{ c | cccc}  \hline
{\bf regularization } & \multicolumn{4}{c}{  \bf average error, \% } \\ 
\begin{tabular}[c]{@{}c@{}}(boundary\\smoothness)\end{tabular}  &
\begin{tabular}[c]{@{}c@{}}Gaussian\\\AA/\end{tabular} & 
\begin{tabular}[c]{@{}c@{}}Gaussian\\\NC/\end{tabular}  &  
\begin{tabular}[c]{@{}c@{}}\KNN/\\\AA/\end{tabular}   &  
\begin{tabular}[c]{@{}c@{}}\KNN/\\\NC/\end{tabular}  \\ \hline
none$^\dag$                & 20.4  & 17.6 & \textbf{12.2} & {12.4} \\ \hline
Euclidean length$^*$   & 15.1  & 16.0 & \textbf{10.2} & {11.0} \\ \hline
contrast-sensitive$^*$  &   9.7  & 13.8 &  \textbf{7.1} & {7.8} \\  \hline
  \end{tabular}
\hfill{}}
\caption{Interactive segmentation errors. \AA/ stands for the average association, \NC/ stands for the normalized cut. 
Errors are averaged over the GrabCut dataset\cite{GrabCuts:SIGGRAPH04}, see samples in Figure~\ref{fig:grabcutexamples}.  
$^*$We use \cite{kkm:iccv15,nc+mrf:eccv16} for a combination of Kernel K-means objective~\eqref{eq:kKmeans3} with \emph{Markov Random Field} (MRF) regularization terms. 
The relative weight of the MRF terms is chosen to minimize the average error on the dataset. 
$^\dag$Without the MRF term, \cite{kkm:iccv15} and \cite{nc+mrf:eccv16} correspond to the standard kernel K-means~\cite{dhillon2004kernel,Chitta2011}.}
\label{tb:errorrates}
\end{table}

\section{Normalized Cut and \thebias/} \label{sec:NC and BB}

{

\thebias/ for kernel K-means criterion \eqref{eq:kKmeans3}, a.k.a. {\em average association} (\AA/)  \eqref{eq:AA}, was empirically 
identified in \cite{Shi2000}, but our Theorem~\ref{th:gini_bias} is its first theoretical explanation.
This bias was the main critique against \AA/ in \cite{Shi2000}. 
They also criticize {\em graph cut} \cite{WuLeahy:PAMI93} that ``favors cutting small sets of isolated nodes''.
These critiques are used to motivate {\em normalized cut} (\NC/) criterion \eqref{eq:NC} aiming at balanced 
clustering without ``clumping'' or ``splitting''.

We do not obeserve any evidence of the {\em mode isolation bias} in \NC/. However,
Section \ref{sec:NC sparse subsets} demonstrates that \NC/ still has a bias to isolating sparse subsets.
Moreover, using the general density analysis approach introduced in Section \ref{sec:Nash} we also show in Section \ref{sec:density inversion}
that {\em normalization} implicitly corresponds to some density-inverting embedding of the data. Thus,
{\em mode isolation} (\thebias/) in this implicit embedding corresponds to the {\em sparse subset bias} of \NC/ in the original data.

\subsection{Sparse subset bias in Normalized Cut} \label{sec:NC sparse subsets}

The normalization in \NC/ does not fully remove the bias to small isolated subsets
and it is easy to find examples of ``splitting'' for weakly connected nodes, see Figure~\ref{fig:nc fails}(a).
The motivation argument for the \NC/ objective 
below Fig.1 in \cite{Shi2000} implicitly assumes similarity matrices with zero diagonal, which excludes many common similarities like 
Gaussian kernel \eqref{eq:gauss_k}. Moreover, their argument is built specifically for an example with a single isolated point, 
while an isolated pair of points will have a near-zero \NC/ cost even for zero diagonal similarities.

Intuitively, this \NC/ issue can be interpreted as a bias to the ``sparsest'' subset (Figure~\ref{fig:nc fails}a), the opposite of \AA/'s bias to the ``densest'' subset, 
\ie \thebias/ (Figure~\ref{fig:synexp}c). The next subsection discusses the relation between these opposite biases in detail. In any case, 
both of these density inhomogeneity problems in \NC/ and \AA/ are directly addressed by our {\em density equalization} principle 
embodied in adaptive weights $w_p\propto 1/\rho_p$ in Section~\ref{sec:adaptive weighting} or in the locally adaptive kernels derived in Section \ref{sec:density_equalization}. 
Indeed, the result in Figure~\ref{fig:synexp}(d) can be replicated with \NC/ using such adaptive kernel.
Interestingly, \cite{zelnik2004self} observed another data non-homogeneity problem in \NC/ different from the sparse subset bias in Figure~\ref{fig:nc fails}(a), 
but suggested a similar adaptive kernel as a heuristic solving it.

\begin{figure}
        \centering
        \setlength\tabcolsep{2 pt}
        \begin{tabular}{cc}
         \begin{tikzpicture}{1.6in}
        \node at (0,0) {\includegraphics[width=1.6in]{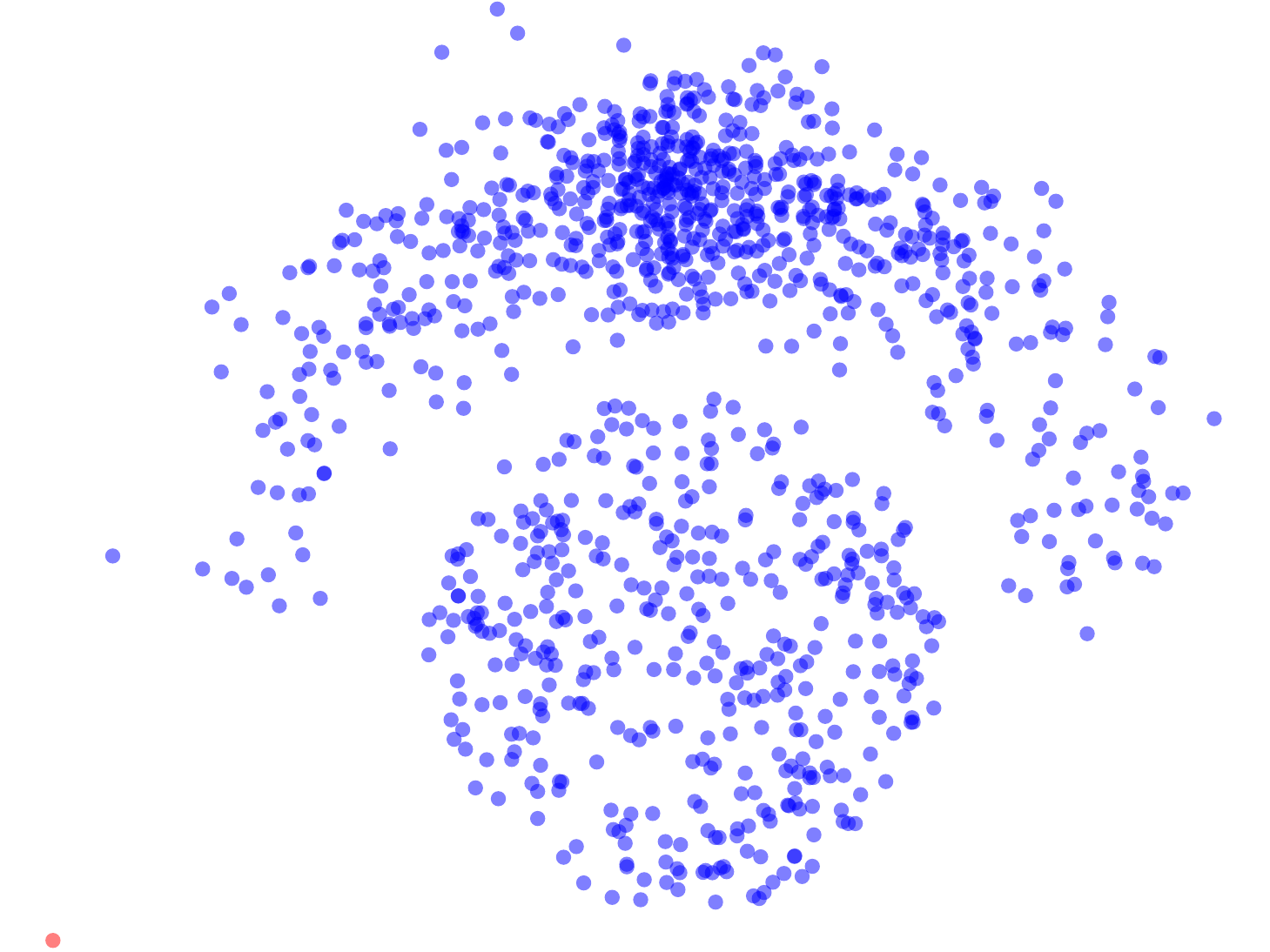}};
        \node at (0.55in,-0.45in) {\tiny $\begin{aligned}\sigma&=2.47 \\[-0.5ex] \NC/&=0.202\end{aligned}$};
        \draw[-stealth, line width=0.4mm, color=red] (-1.4,-1.02) -- +(-0.3,-0.3);
        \end{tikzpicture}  &
         \begin{tikzpicture}{1.6in}
        \node at (0,0) {\includegraphics[width=1.6in]{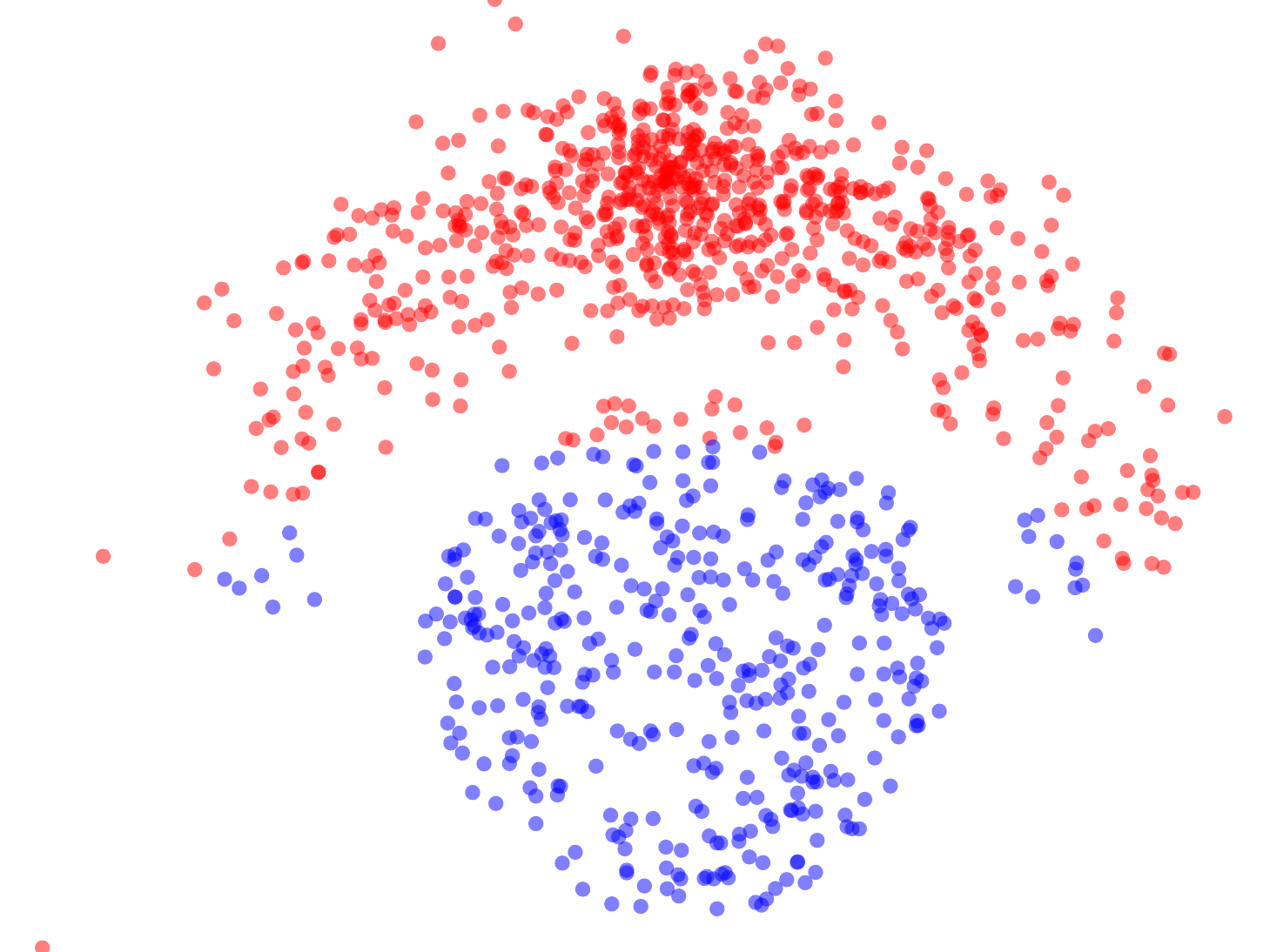} };
        \node at (0.55in,-0.45in) {\tiny $\begin{aligned}\sigma&=2.48 \\[-0.5ex] \NC/&=0.207\end{aligned}$};
        \end{tikzpicture}\\[-0.5ex]
         (a) \NC/ for smaller bandwidth
        & (b)  \NC/ for larger bandwidth \\[-0.5ex]
        (bias to ``sparsest'' subsets) & (loss of non-linear separation)
         \end{tabular}
        \caption{\label{fig:nc fails} Normalized Cut with kernel~\eqref{eq:gauss_k} on the same data as in Figure~\ref{fig:synexp}(c,d).  
        For small bandwidths \NC/ shows bias to small isolated subsets (a). As bandwidth increases, the first non-trivial solution 
        overcoming this bias (b) requires bandwidth large enough so that problems with non-linear separation become visible. 
        Indeed, for larger bandwidths the node degrees become more uniform $d_p\approx \const$
        reducing \NC/ to average association, which is known to degenerate into basic K-means (see Section \ref{sec:extreme}). 
        Thus, any further increase of $\sigma$ leads to solutions even worse than (b). In this simple example
        no fixed $\sigma$ leads \NC/ to a good solution as in Figure~\ref{fig:synexp}(d). That good solution uses adaptive kernel from 
        Section \ref{sec:density_equalization} making specific clustering criterion (\AA/, \NC/, or \AC/) irrelevant, see \eqref{eq:clustering equivalence}.}
\end{figure}

\subsection{Normalization as density inversion} \label{sec:density inversion}

The bias to sparse clusters  in \NC/ with small bandwidths (Figure~\ref{fig:nc fails}a) seems the opposite of mode isolation in \AA/ (Figure~\ref{fig:synexp}c).
Here we show that this observation is not a coincidence since \NC/ can be reduced to \AA/ after some density-inverting data transformation. 
While it is known \cite{bach:nips03,dhillon2004kernel} that \NC/ is equivalent to {\em weighted} kernel K-means 
(\ie {\em weighted} \AA/) with some modified affinity, this section relates such kernel modification 
to an implicit density-inverting embedding where {\em mode isolation} (\thebias/) corresponds to {\em sparse clusters} in the original data.

First, consider standard weighted \AA/ objective for any given affinity/kernel matrix $\hat{A}_{pq}=k(f_p,f_q)$ as in \eqref{eq:wAA} 
$$-\sum_k \frac{\sum_{pq\in S_k} w_p w_q \hat{A}_{pq}}{\sum_{p\in S_k} w_p} .$$
Clearly, weights based on node degrees $w=d$ and ``normalized'' affinities $\hat{A}_{pq}=\frac{A_{pq}}{d_p d_q}$ turn this into \NC/ objective \eqref{eq:NC}.
Thus, average association \eqref{eq:AA} becomes \NC/ \eqref{eq:NC} after two modifications:
\begin{itemize}
\item replacing $A_{pq}$ by normalized affinities $\hat{A}_{pq}=\frac{A_{pq}}{d_p d_q}$ and
\item introducing point weights $w_p=d_p$.
\end{itemize}
Both of these modifications of \AA/ can be presented as implicit data transformations modifying denisty.
In particular, we show that the first one ``inverses'' density turning sparser regions into denser ones, 
see Figure~\ref{fig:density_plot_Amod}(a). The second data modification is generally discussed as a density transform in \eqref{eq:wDensity}. 
We show that node degree weights $w_p=d_p$ do not remove the ``density inversion''. 
}

\pgfkeys{/pgf/fpu}
\pgfmathparse{16383+1}
\edef\tmp{\pgfmathresult}
\pgfkeys{/pgf/fpu=false}

\begin{figure}
        \centering
        \begin{tabular}{cc}
        \begin{tikzpicture}{1.5in}
        \node (A) at (3,7.1) [right] {$x$};
        \draw[->] (-0.1,7.1) -- (A);	  
	  \draw[->] (0,7.05) -- (0,9);
	  \draw[thick,domain=1.15:14,smooth,variable=\x,blue] plot ({\x/5},{exp(16+ln(\x)+6.9-10*ln(7.9+ln(\x)))});
	  \draw (2,7.05) -- (2,7.15);
	  \node at (2,7.05) [below] {$10^4$};
	  \node at (0,7.05) [below] {$0$};
        \node at (2,8.4) {$\tau(x) = \frac{x}{(1+\log x)^{10}}$};
        \end{tikzpicture}  &
        \begin{tikzpicture}{1.5in}
        \node (A) at (4,6.6) [right] {$x$};
        \draw[->] (0.4,6.6) -- (A);	  
	  \draw[->] (0.5,6.5) -- (0.5,8.5);
	  \draw[thick,domain=20:100,smooth,variable=\x,blue] plot ({\x/25},{exp(10+2*ln(\x)-10*ln(1+ln(\x)))});
	  \draw (3,6.55) -- (3,6.65);
	  \node at (3,6.55) [below] {$75$};
	  \draw (1,6.55) -- (1,6.65);
	  \node at (1,6.55) [below] {$25$};
        \node at (3,8) {$\tau(x) = \frac{x^2}{(1+\log x)^{10}}$};
        \end{tikzpicture}   \\[-0.5ex]
         (a) density transform \eqref{eq:density_transform_Amod}   & (b) density transform \eqref{eq:density_transform_Amod_weights} \\[-0.5ex]
         {\footnotesize (kernel normalization only)} & {\footnotesize (with additional point weighting)}
        \end{tabular}
        \caption{ \label{fig:density_plot_Amod} ``Density inversion'' in sparse regions.
        Using node degree approximation $d_p \propto \rho_p$ \eqref{eq:degree=density}
        we show representative density transformation plots (a) $\bar{\rho}_p=\tau(\rho_p)$  and (b) $\rho'_p=\tau(\rho_p)$ 
        corresponding to \AA/ with kernel modification $\hat{A}_{pq}=\frac{A_{pq}}{d_p d_q}$ \eqref{eq:density_transform_Amod}  and 
        additional point weighting $w_p=d_p$ \eqref{eq:density_transform_Amod_weights} exactly corresponding to \NC/.
        This additional weighting weakens the density inversion in (b) compared to (a), see the $x$-axis scale difference.
        However, it is easy to check that the minima in \eqref{eq:density_transform_Amod} and \eqref{eq:density_transform_Amod_weights} 
        are achieved at some $x^*$ exponentially growing with $\bar{N}$.
        This makes the density inversion significant  for \NC/ since $\bar{N}$ may equal the data size.}
\end{figure}
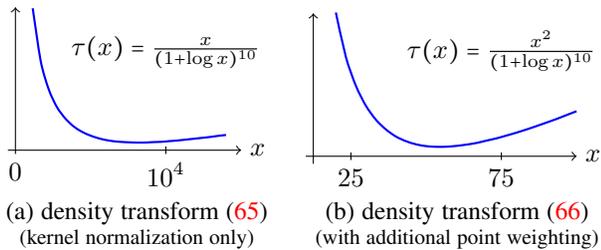

{
For simplicity, assume standard Gaussian kernel \eqref{eq:gauss_k} based on Euclidean distances $d_{pq}=\|f_p-f_q\|$ in $\Real^N$
$$A_{pq}=\exp\frac{-d^2_{pq}}{2\sigma^2} .$$ 
To convert \AA/ into \NC/ we first need an affinity ``normalization''
\begin{equation} \label{eq:modA}
\hat{A}_{pq} =\frac{A_{pq}}{d_p d_q}=\exp\frac{-d^2_{pq}-2\sigma^2\log (d_p d_q)}{2\sigma^2}=\exp\frac{-\hat{d}^2_{pq}}{2\sigma^2}
\end{equation}
equivalently formulated as a modification of distances
\begin{equation} \label{eq:mod_dist}
\hat{d}^2_{pq} \;\;:=\;\; d^2_{pq}+2\sigma^2\log (d_p d_q).
\end{equation}
Using a general approach in the proof of Theorem \ref{th:equiv_clustering}, 
there exists some Euclidean embedding $\bar{f}_p\in\Real^{\bar{N}}$ and constant $h\geq0$ such that
\begin{equation} \label{eq:dist2}
\bar{d}^2_{pq} \;\;:=\;\; \|\bar{f}_p-\bar{f}_q\|^2 \;\; =\;\;  \hat{d}^2_{pq} +h^2[p\neq q].
\end{equation}
Thus, modified affinities $\hat{A}_{pq}$ in \eqref{eq:modA} correspond to the Gaussian kernel  for the new embedding $\{\bar{f}_p\}$ in  $\Real^{\bar{N}}$
$$\hat{A}_{pq}\;\;\propto\;\;\exp\frac{-\bar{d}^2_{pq}}{2\sigma^2}  \;\; \equiv \;\;\exp\frac{-\|\bar{f}_p-\bar{f}_q\|^2}{2\sigma^2} .$$

Assuming $d_q\approx d_p$ for features $f_q$ near $f_p$, equations \eqref{eq:mod_dist} and \eqref{eq:dist2}
imply the following relation for such neighbors of $f_p$ 
$$\bar{d}^2_{pq} \;\;\approx\;\; d^2_{pq} + h^2 + 4\sigma^2\log (d_p). $$
Then, similarly to the arguments in \eqref{eq:density derivation},
a small ball of radius $\varepsilon$ centered at $f_p$ in $\Real^N$ and a ball of radius $\sqrt{\varepsilon^2 + h^2 + 4\sigma^2\log (d_p)}$ 
at $\bar{f}_p$ in $\Real^{\bar{N}}$ contain the same number of points.
Thus, similarly to \eqref{eq:density_transform} we get a relation between densities at points $f_p$ and $\bar{f}_p$
\begin{equation} \label{eq:density_transform_Amod}
\bar{\rho}_p  \;\;\approx\;\; \frac{\rho_p\; \varepsilon^N }{ (\varepsilon^2 + h^2 + 4\sigma^2\log (d_p))^{\bar{N}/2}}.
\end{equation}
This implicit density transformation is shown in Figure~\ref{fig:density_plot_Amod}(a). Sub-linearity in dense regions
addresses mode isolation (\thebias/). However, sparser regions become relatively dense and kernel-modified \AA/ may split them.
Indeed, the result in Figure~\ref{fig:nc fails}(a) can be obtained by \AA/ with normalized affinity $\frac{A_{pq}}{d_p d_q}$.

\begin{figure}
        \centering
        \begin{tabular}{ccc}
	   \includegraphics[height=0.8in,trim=12cm 0 0 0,clip]{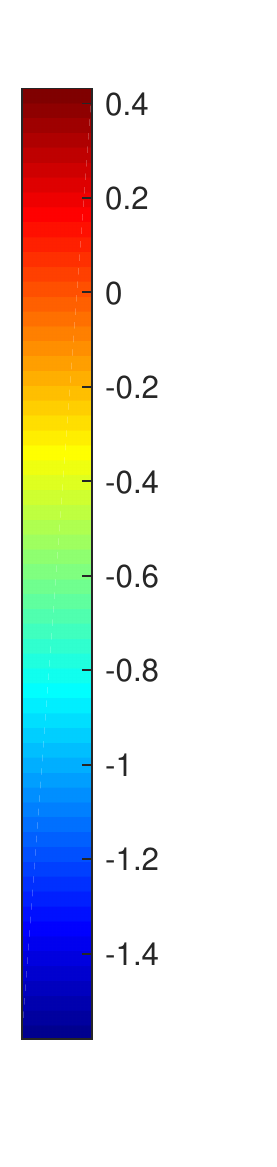} \hspace{-2ex} &
	  \includegraphics[width=1.5in,trim=1cm 1cm 0 2cm,clip]{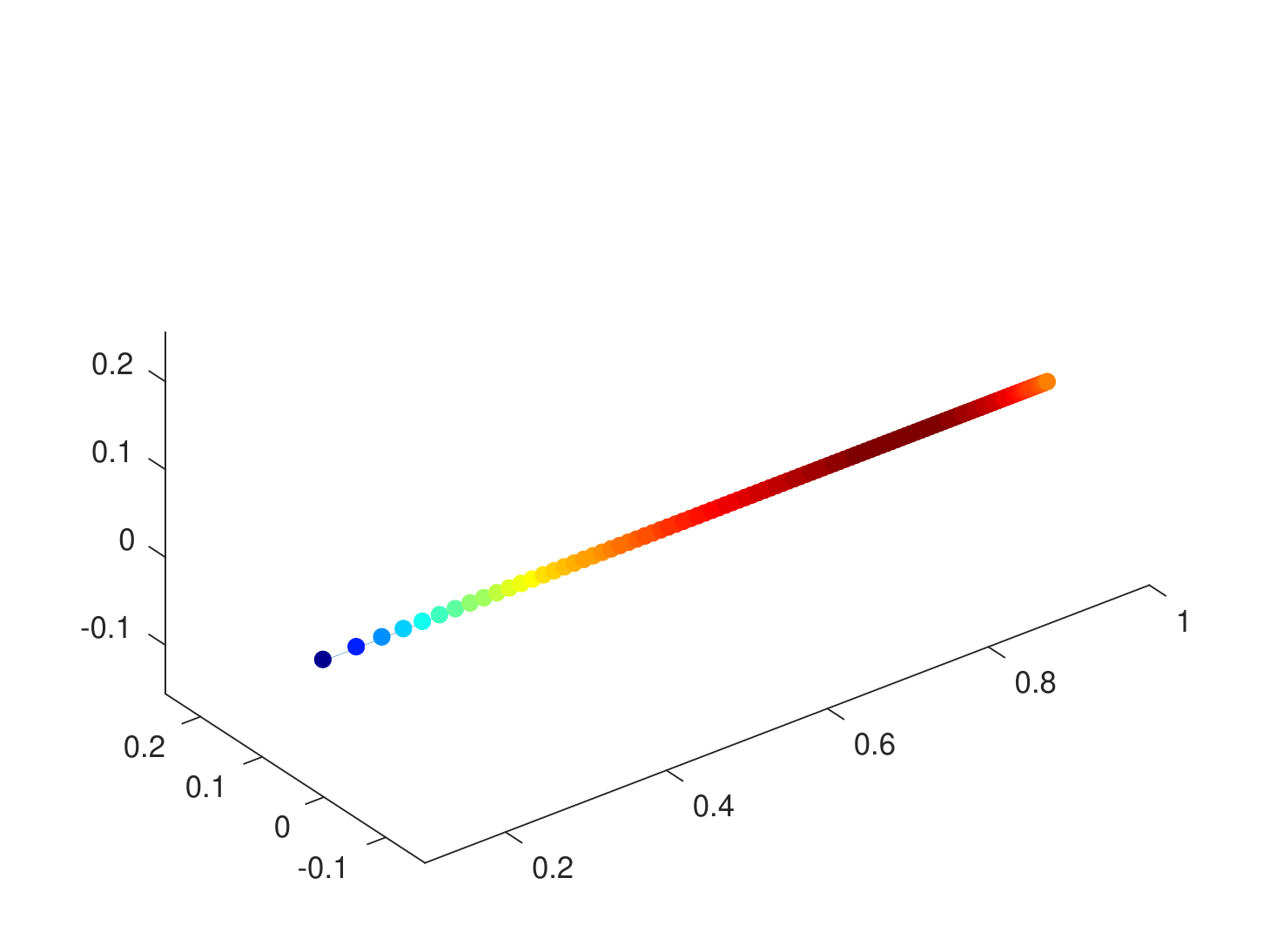} &
	  \includegraphics[width=1.5in,trim=1cm 1cm 0 2cm,clip]{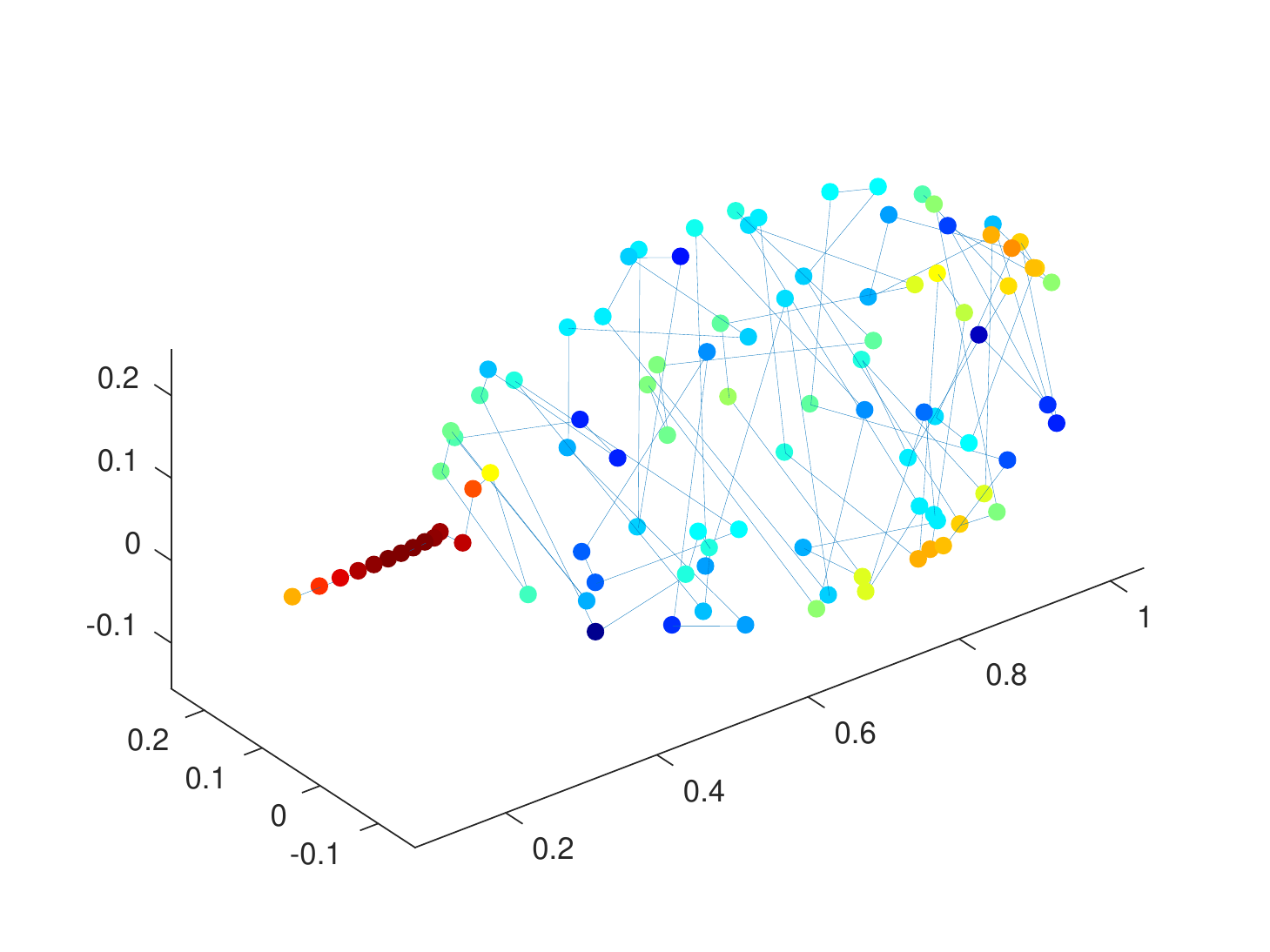}  \\[-0.5ex]
         & (a) original data $\{f_p\}\subset\Real^1$    & (b) embedding $\{\bar{f}_p\}\subset\Real^{\bar{N}}$ 
        \end{tabular}
        \caption{\label{fig:inverse density embedding} Illustration of ``density inversion'' for 1D data.
        The original data points (a) are getting progressively denser along the line. The points are color-coded according to the log of their density.
        Plot (b) shows 3D approximation $\{y_p\}\subset \Real^3$ of high-dimensional Euclidean embedding $\{\bar{f}_p\}\subset\Real^{\bar{N}}$
        minimizing metric errors $\sum_{pq} ( \hat{d}^2_{pq} - \|y_p-y_q\|^2 )^2$ where $\hat{d}_{pq}$ are distances \eqref{eq:mod_dist}.}
\end{figure}
 
The second required modification of \AA/ introduces point weights $w_p=d_p$. 
It has an obvious equivalent formulation via data points replication discussed in Section \ref{sec:adaptive weighting}, see Figure~\ref{fig:density equalization}(a). 
Following \eqref{eq:wDensity}, we obtain its implicit density modification effect $\rho'_p = d_p \bar{\rho}_p$.
Combining this with density transformation  \eqref{eq:density_transform_Amod} 
implied by affinity normalization $\frac{A_{pq}}{d_p d_q}$, we obtain the following density transformation effect corresponding to \NC/, 
see Figure~\ref{fig:density_plot_Amod}(b),
\begin{equation} \label{eq:density_transform_Amod_weights}
\rho'_p  \;\;\approx\;\; \frac{d_p \; \rho_p\; \varepsilon^N }{ (\varepsilon^2 + h^2 + 4\sigma^2\log (d_p))^{\bar{N}/2}}.
\end{equation}
The density inversion in sparse regions relates \NC/'s result in Figure~\ref{fig:nc fails}(a) to 
\thebias/ for embedding $\{\bar{f}_p\}$ in $\Real^{\bar{N}}$.

Figure \ref{fig:density_plot_Amod} shows representative plots for density transformations  \eqref{eq:density_transform_Amod},
\eqref{eq:density_transform_Amod_weights} using the following node degree approximation based on Parzen approach \eqref{eq:Parzen_estimate}
for Gaussian affinity (kernel) $A$
\begin{equation} \label{eq:degree=density}
d_p = \sum_q A_{pq} \propto \rho_p.
\end{equation} 

{
\setlength{\columnsep}{0.5ex}%
\setlength{\intextsep}{-1ex}%

Empirical relation between $d_p$ and $\rho_p$ is illistrated below: some
\begin{wrapfigure}{r}{42mm}
        \begin{tikzpicture}{35mm}
	  \node at (0,1.8) [right] {$d_p$ - node degree};
        \draw[->] (-0.1,0) -- (3.5,0);	  
	  \draw[->] (0,-0.1) -- (0,1.8);
	  \draw[dashed,domain=0:3.5,smooth,variable=\x,pink] plot ({\x},{\x/2});
	  \draw[dashed,domain=-0.05:3.5,smooth,variable=\x,pink] plot ({\x},{1.3});
	  \node at (0,1.3) [red,left] {$N$};
	  \node at (3,1.4) [red,rotate=30,above] {$d_p \sim \rho_p$};
	  \draw[thick,domain=0:3.5,smooth,variable=\x,black] plot ({\x},{1.4-exp(-\x/2)*1.1});
	  \node at (3,1.2) [below] {$d_p(\rho_p)$};
	  \node at (0,0.3) [left] {$1$};
	  \draw (-0.05,0.3) -- (0.05,0.3);
	  \node at (-0.1,0) [below] {$0$};
        \node at (3.5,0.2) [left] {$\rho_p$ - density};
        \end{tikzpicture}  
\end{wrapfigure}
overestimation occurs for sparcer regions and underestimation happens for denser regions.
The node degree for Gaussian kernels has to be at least $1$ (for an isolated node) and at most $N$ (for a dense graph).

}

}

\section{Discussion {\small (kernel clustering equivalence)}} \label{sec:discussion}

Density equalization with adaptive weights in Section \ref{sec:adaptive weighting} or adaptive kernels in Section \ref{sec:Bandwidth selection} 
are useful for either \AA/ or \NC/ due to their density biases (mode isolation or sparse subset). 
Interestingly, kernel clustering criteria discussed in \cite{Shi2000} such as normalized cut (\NC/), {\em average cut} (\AC/), 
average association (\AA/) or kernel K-means are practically equivalent for such adaptive methods.
This can be seen both empirically (Table~\ref{tb:errorrates}) and conceptually. 
Note, weights $w_p\propto 1/\rho_p$ in Section \ref{sec:adaptive weighting}
produce modified data with near constant node degrees $d'_p \propto \rho'_p \propto 1$, 
see \eqref{eq:degree=density} and \eqref{eq:wDensity}.
Alternatively, KNN kernel (Example~\ref{ex:KNN kernel}) with density equalizing bandwidth \eqref{eq:knn bandwidth} also produce 
nearly constant node degrees $d_p \approx K$ where $K$ is the neighborhood size. Therefore, both cases give
\begin{equation}  \label{eq:clustering equivalence}
 -\frac{\sum_{pq \in S^k} A_{pq}}{\sum_{p \in S^k}d_p}  \;\; \propto \;\;   
 -\frac{\sum_{pq \in S^k} A_{pq}}{K\,|S^k|}  \;\;\approxutc\;\;  \frac{\sum_{p \in S^k, q \in \bar{S}^k} A_{pq}}{ K\,|S^k|} ,  
\end{equation} 
which correspond to \NC/ \eqref{eq:NC}, \AA/  \eqref{eq:AA}, and \AC/ criteria. As discussed in \cite{Shi2000}, the last objective
also has very close relations with standard partitioning concepts in spectral graph theory: {\em isoperimetric} or {\em Cheeger number},
{\em Cheeger set}, {\em ratio cut}. 

This equivalence argument applies to the corresponding clustering objectives and is independent of specific optimization algorithms developed for them. 
Interestingly, the relation between \eqref{eq:AA} and basic K-means objective \eqref{eq:kmixed} suggests that standard Lloyd's algorithm 
can be used as a basic iterative approach for approximate optimization of all clustering criteria in \eqref{eq:clustering equivalence}.
In practice, however, kernel K-means algorithm corresponding to the exact high-dimensional embedding $\{\phi_p\}$ in \eqref{eq:kmixed}
is more sensitive to local minima compared to iterative K-means over approximate lower-dimensional embeddings based on PCA 
\cite[Section 3.1]{KC:arXiv16}\footnote{K-means is also commonly used as a discretization heuristic for {\em spectral relaxation} \cite{Shi2000}
where a similar eigen analysis is motivated by spectral graph theory \cite{Cheeger:70, Hoffman:73, Fiedler:75} 
defferently from PCA dimensionalty reduction in \cite{KC:arXiv16}.}.

{\NEW
\section{Conclusions}

This paper identifies and proves density biases, \ie isolation of modes or sparsest subsets, 
in many well-known kernel clustering criteria such as kernel K-means (average association), ratio cut, normalized cut, dominant sets. 
In particular, we show conditions when such biases happen.
Moreover,  we propose density equalization as a general principle for resolving such biases. We suggest
two types of density equalization techniques using adaptive weights or adaptive kernels. 
We also show that density equalization unifies many popular kernel clustering objectives by making them equivalent.
}

\section*{Acknowledgements}
The authors would like to thank Professor Kaleem Siddiqi (McGill University) for 
suggesting a potential link between \thebias/ and the {\em dominant sets}.
This work was generously supported by the Discovery and RTI programs of the 
National Science and Engineering Research Council of Canada (NSERC).

{

}

\begin{IEEEbiography}[{\includegraphics[width=1in,height=1.25in,clip,keepaspectratio]{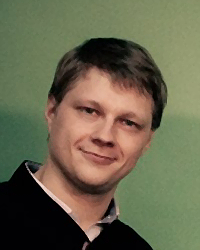}}]{Dmitrii Marin} received Diploma of Specialist from the Ufa State Aviational Technical University in 2011, and M.Sc. degree in Applied Mathematics and Information Science from the National Research University Higher School of Economics, Moscow, and graduated from the Yandex School of Data Analysis, Moscow, in 2013. In 2010 obtained a certificate of achievement at ACM ICPC World Finals, Harbin. 
He is a PhD candidate at the Department of Computer Science, University of Western Ontario under supervision of Yuri Boykov. His research is focused on designing general unsupervised and semi-supervised methods for accurate image segmentation and object delineation.
\end{IEEEbiography}

\begin{IEEEbiography}[{\includegraphics[width=1in,height=1.25in,clip,keepaspectratio]{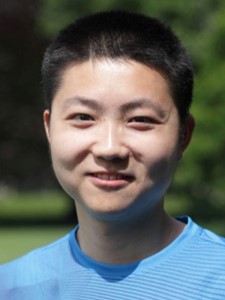}}]{Meng Tang} is a PhD candidate in computer science at the University of Western Ontario, Canada, supervised by Prof. Yuri Boykov. He obtained MSc in computer science in 2014 from the same institution for his thesis titled "Color Separation for Image Segmentation". Previously in 2012 he received B.E. in Automation from the Huazhong University of Science and Technology, China. He is interested in image segmentation and semi-supervised data clustering. He is also obsessed and has  experiences on discrete optimization problems for computer vision and machine learning.
\end{IEEEbiography}

\begin{IEEEbiography}[{\includegraphics[width=1in,height=1.25in,clip,keepaspectratio]{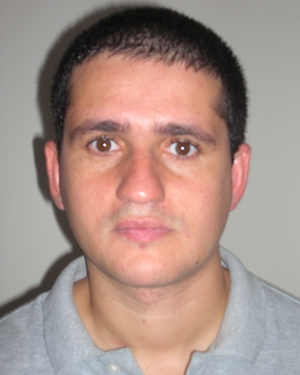}}]{Ismail Ben Ayed} received the PhD  degree
 (with the highest honor) in computer vision from the Institut National de la Recherche Scientifique (INRS-EMT), Montreal, QC, in 2007. 
He is currently Associate Professor at the Ecole de Technologie Superieure (ETS), University of Quebec, where he holds a research chair on Artificial Intelligence in Medical Imaging. Before joining the ETS, he worked for 8 years as a research scientist at GE Healthcare, London, ON, conducting research in medical image analysis. He also holds an adjunct professor appointment at the University of Western Ontario (since 2012). Ismail's research interests are in computer vision, optimization, machine learning and their potential applications in medical image analysis.
\end{IEEEbiography}

\begin{IEEEbiography}[{\includegraphics[width=1in,height=1.25in,clip,keepaspectratio]{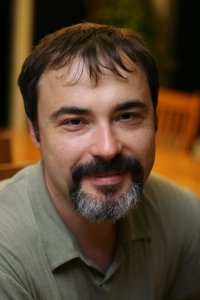}}]{Yuri Boykov}
received "Diploma of Higher Education" with honors at Moscow Institute of Physics and Technology (department of Radio Engineering and Cybernetics) in 1992 and completed his Ph.D. at the department of Operations Research at Cornell University in 1996.
He is currently a full professor at the department of Computer Science at the University of Western Ontario. His research is concentrated in the area of computer vision and biomedical image analysis. In particular, he is interested in problems of early vision, image segmentation, restoration, registration, stereo, motion, model fitting, feature-based object recognition, photo-video editing and others. He is a recipient of the Helmholtz Prize (Test of Time) awarded at International Conference on Computer Vision (ICCV), 2011 and Florence Bucke Science Award, Faculty of Science, The University of Western Ontario, 2008.
\end{IEEEbiography}

\end{document}